\newtheorem{theorem}{Theorem}
\newtheorem{lemma}{Lemma}
\newtheorem{proof}{Proof}
\newtheorem{proposition}{Proposition}
\newtheorem{remark}{Remark}
\begin{document}
%
\title{Distributed $k$-means algorithm}
%
%
%
%

\author{Gabriele~Oliva,~Roberto~Setola,~and~Christoforos~N.~Hadjicostis
\IEEEcompsocitemizethanks{\IEEEcompsocthanksitem 
G. Oliva and R. Setola are with the Complex Systems \& Security Laboratory, University Campus Bio-Medico, via A. del Portillo 21, 00128, Rome, Italy.\protect\\
E-mail: g.oliva@unicampus.it, r.setola@unicampus.it}
\IEEEcompsocitemizethanks{\IEEEcompsocthanksitem Christoforos~N.~Hadjicostis is with the Department of Electrical and Computer Engineering, University of Cyprus, 75 Kallipoleos Avenue, P.O. Box 20537, 1678 Nicosia
Cyprus.\protect\\
E-mail: chadjic@ucy.ac.cy}
\thanks{}}

%
%

\markboth{submitted to IEEE Transactions on Mobile Computing}%
{Shell \MakeLowercase{\textit{et al.}}: Bare Demo of IEEEtran.cls for Computer Society Journals}
%


\IEEEcompsoctitleabstractindextext{%
\begin{abstract}
In this paper we provide a fully distributed implementation of the $k$-means clustering algorithm, intended for wireless sensor networks where each agent is endowed with a possibly high-dimensional observation (e.g., position, humidity, temperature, etc.).
The proposed algorithm, by means of one-hop communication, partitions the agents into measure-dependent groups that have small in-group and large out-group ``distances".
Since the partitions may not have a relation with the topology of the network--members of the same clusters may not be spatially close--the algorithm is provided with a mechanism to compute the clusters'centroids even when the clusters are disconnected in several sub-clusters.The results of the proposed distributed algorithm coincide, in terms of minimization of the objective function, with the centralized $k$-means algorithm.
Some numerical examples illustrate the capabilities of the proposed solution. 
\end{abstract}

\begin{keywords}
$k$-means, clustering, distributed algorithms, consensus
\end{keywords}}

\maketitle

\IEEEdisplaynotcompsoctitleabstractindextext

%
\IEEEpeerreviewmaketitle

\section{Introduction}

Wireless sensor networks are increasingly asserting their presence in everyday life, as a powerful tool to measure and manage spatially dispersed information in several applications, such as monitoring, robot coordination, indoor localization, rescue, etc.
Dealing with huge sets of multidimensional data is, in many cases, a non-trivial issue, and there is the need to provide agile and efficient methodologies to grasp the fundamental aspects of the ongoing situation.

In the literature, several distributed techniques have been provided in order to let a network of agents reach distributed agreement (or {\em consensus}) based on iterative strategies that rely at each iteration on local observations (for instance,  \cite{Olfati1,Olfati2,cortes2008distributed}). 

The resulting agreement, however, is on a single value for all the agents in the network, while in many circumstances it might be of interest to obtain multiple values, as well as to identify sets of observations that are homogeneous according to some criteria.
In the literature, a powerful, yet centralized, solution to the latter problem is provided by means of the so called {\it clustering algorithms}. One of the most widely adopted clustering algorithms is the $k$-means algorithm, as well as its extensions such as fuzzy $c$-means \cite{Dunn:1973} or Mixture of Gaussians \cite{Dempster:1977} algorithms. The above approaches are examples of unsupervised learning algorithms, where a set of observations has to be partitioned in a finite number  of groups (eventually considering soft partitions in the case of fuzzy c-means) with small in-group and large out-group distances, or has to be explained by a combination of probability distributions (Mixture of Gaussians).

The $k$-means algorithm in particular, is an iterative procedure where a set of $k$ centroids are obtained 
by alternating assignment phases (i.e., each observation is assigned to the nearest current centroid) and refinement phases where the average of the observations in the same partition becomes the new centroid.

In this paper, we provide a fully distributed implementation of the $k$-means algorithm for a network of agents, each holding a possibly high-dimensional observation or piece of information.  

\subsection{Paper Outline}
The outline of the paper is as follows:
after two subsections on notation and definitions that conclude this  introduction, Section \ref{kmeans} reviews the $k$-means clustering algorithm, while Section \ref{relwork} provides a discussion on related work and on the contribution of the paper; Section \ref{consensus} outlines distributed consensus algorithms; Section \ref{section4:distrib} introduces the distributed $k$-means algorithm, while section \ref{correctness_computational_complexity} provides a discussion on the correctness and complexity of the proposed algorithm; simulation results and conclusions are provided in Sections \ref{results} and \ref{conclusions}, respectively. 

\subsection{Notation}
\begin{supertabular}[l]{p{0.5in}p{2.8in}}
$A'$&transpose of a matrix/vector $A$;\\
$\#(A)$&cardinality of a set $A$;\\
$A\otimes B$&Kronecker product of matrix/vector $A$ and $B$;\\
$I_a$&$a\times a$ identity matrix;\\
$1_a$& $a\times 1$ vector whose elements are all equal to $1$;\\
$0_a$& $a\times 1$ vector whose elements are all equal to $0$;\\
$e_i$& $i$-th vector of the canonical basis;\\
$G$&graph;\\
$\mathcal{V}$& set of vertices of the graph $G$;\\
$\mathcal{E}$& set of edges of the graph $G$;\\
$v_i$& $i$-th vertex of the graph $G$;\\
$(v_i,v_j)$& directed edge connecting two vertices $v_i$ (source)  and $v_j$ (destination) of graph $G$;\\
$\mathcal{N}_i$& neighborhood of agent $i$;\\
$T$&identifier of a step in the (distributed) $k$-means algorithm;\\
$t$&identifier of an iteration in a distributed consensus algorithm;\\
$n$&	number of agents/observations;\\
$d$& size of the observations;\\
$x_i$&$d\times 1$ vector of the $i$-th observation;\\
$k$&number of clusters for $k$-means algorithm;\\
$S_i(T)$& $i$-th cluster at step $T$;\\
$r_{ij}(T)$& boolean decision variable representing the assignment of observation $x_i$ to cluster $S_j(T)$ at step $T$;\\
$c_j(T)$& $d\times 1$ vector of the centroid of cluster $S_j(T)$ at step $T$;\\
$D(T)$& objective functional of the $k$-means algorithm at step $T$;\\
$M$&maximum number of iterations of the $k$-means algorithm;\\
$\Delta_{\max}$&positive parameter used to terminate the $k$-means algorithm;\\
$z_i(t)$&generic, possibly vectorial, state assumed by the $i$-th agent of a consensus algorithm at iteration $t$;\\
$z_{i0}$&generic, possibly vectorial, initial condition assumed by the $i$-th agent of a consensus algorithm at iteration $t$;\\
$u_i(t)$& input for the $i$-th agent in consensus algorithms at iteration t;\\
$\chi(\cdot)$& function to be computed by $\chi$-consensus;\\
$t_{\max}$&maximum number of iterations for consensus algorithms;\\
$\overline z_i$&final value of the state of the $i$-th agent in finite-time consensus algorithms;\\
$w_{ij}$&coefficients that compose the input $u_i(\mathcal{N}_i\cup \{i\},t)$ of average-consensus algorithm;\\
$W$& $n\times n$ matrix containing all the terms $w_{ij}$;\\
$z(t)$&stack vector containing the state $z_i(t)$ of each agent at iteration $t$ of a consensus algorithm;\\
$\delta_i$&number of steps necessary to calculate the average-consensus value in finite time for the $i$-th agent;\\
$\gamma_{ij}$& $j$-th coefficient used by the $i$-th agent to calculate the average-consensus value in finite time;\\
$\Gamma_i$&$\delta_i\times 1$ vector containing the terms $\gamma_{ij}$;\\
$q_i(g)$&minimal polynomial of agent $i$;\\
$\beta_{ij}$&coefficients used to calculate $\gamma_{ij}$;\\
$\alpha_{ij}$& coefficients used to calculate the minimal polynomial $q_i(g)$;\\
$\alpha_i$&$(\delta_i+1)\times 1$ vector containing the terms $\alpha_{ij}$;\\
$\Theta_i$&Observability matrix used to calculate the minimal polynomial $q_i(g)$;\\
$z_{*,j}(t)$&$n\times 1$ vector containing the state of the agents when the $j$-th preparatory average-consensus is executed in the finite-time consensus;\\
$Z_{i,\delta_i}$&$n\times (\delta_i+2)$ matrix containing up to a step $\delta_i$ the state of the agent $i$ in all the preparatory average-consensus runs executed in the finite-time consensus;\\
$\tilde n$&upper bound of the number of agents $n$;\\
$i^*$&index of the leader in the distributed $k$-means algorithm;\\
$c_{ij}(T)$&$d\times 1$ vector containing the value of the $j$-th centroid as computed by the $i$-th agent at step $T$ of the distributed $k$-means algorithm;\\
$c_i(T)$&$kd\times 1$ stack vector containing the terms $c_{ij}(T)$ for all $j=1,\ldots,k$;\\
$\overline j_i$&index of the centroid selected by agent $i$ at step $T$ of the distributed $k$-means algorithm;\\
$c_{i\overline j_i}(T)$&$d\times 1$ vector containing the value of the centroid chosen by agent $i$ at step $T$ of the distributed $k$-means algorithm;\\
$\mu_i(T)$&$k\times 1$ vector used to represent the choice of a centroid for agent $i$ at step $T$ of the distributed $k$-means algorithm;\\
$\hat \mu_i(T)$&$k\times 1$ vector used to represent the choice of a centroid for agent $i$ at step $T$ of the distributed $k$-means algorithm;\\
$c_i^0(T)$&$kd\times 1$ vector used as initial condition of a max-consensus algorithm in order to let each agent calculate the value of the current centroids at step $T$ of the distributed $k$-means algorithm;\\
$\mathcal{N}_i^c(T)$&subset of $\mathcal{N}_i$ containing agents with the same choice of the nearest center at step $T$ of the distributed $k$-means algorithm;\\
$G^c(T)$& subgraph of $G$ induced by $\mathcal{N}_i^c(T)$ for all $i=1, \ldots, n$;\\
$SCC_{jh}(T)$&$d\times 1$ vector representing the centroid of the $h$-th sub-cluster of the $j$-th cluster at step $T$ of the distributed $k$-means algorithm;\\
$i^*_{jh}(T)$& leader of the $h$-th sub-cluster of the $j$-th cluster at step $T$ of the distributed $k$-means algorithm;\\
$SCS_{jh}(T)$&size (number of agents) of the $h$-th sub-cluster of the $j$-th cluster at step $T$ of the distributed $k$-means algorithm;\\
$\sigma_i(T)$&$d\times 1$ vector selected by agent $i$ in order to calculate the centroids at step $T$ of the distributed $k$-means algorithm;\\
$\epsilon_i(T)$&scalar selected by agent $i$ in order to calculate the centroids at step $T$ of the distributed $k$-means algorithm;\\
$\eta_i^0(T)$&$k(d+1)\times 1$ vector used to calculate the cluster centroid at step $T$ of the distributed $k$-means algorithm;\\
$\overline \eta_j(T)$&$(d+1)\times 1$ vector used to calculate the cluster centroid at step $T$ of the distributed $k$-means algorithm;\\
$\overline \eta(T)$&$k(d+1)\times 1$ vector obtained by using the max-consensus algorithm in order to calculate the cluster centroid at step $T$ of the distributed $k$-means algorithm;\\
$\overline \sigma_j(T)$&$d\times 1$ vector used to calculate the $j$-th centroid at step $T$ of the distributed $k$-means algorithm;\\
$\overline \epsilon_j(T)$&scalar used to calculate the $j$-th centroid at step $T$ of the distributed $k$-means algorithm;\\
$h^j_{\max}(T)$&number of subclusters composing cluster $j$ at step $T$ of the distributed $k$-means algorithm;\\
$\nu_i^0(T)$&scalar initial condition used to verify the exit condition by means of consensus algorithms at step $T$ of the distributed $k$-means algorithm;\\
$\overline \nu_i(T)$&scalar variable used to verify the exit condition at step $T$ of the distributed $k$-means algorithm;\\
\end{supertabular}

\subsection{Definitions}

Let $G=\{\mathcal{V},\mathcal{E}\}$ be a graph, where  $\mathcal{V}$ is a set of $n$ {\it vertices} $v_1, \ldots v_n$ and $\mathcal{E}$ is the set of {\it edges} $(v_i,v_j)$ that allow a communication from vertex $v_i$ to vertex $v_j$. 
A graph is said to be {\it undirected} if $(v_i,v_j)\in \mathcal{E}$ whenever $(v_j,v_i)\in \mathcal{E}$ (i.e., the communication is always bidirectional), and is said to be {directed} otherwise.
A graph $G$ is {\it connected} if for any $v_i, v_j \in \mathcal{V}$ there is a path whose endpoints are in $v_i$ and $v_j$, without necessarily respecting the orientation of edges.
A graph $G$ is {\it strongly connected} if for any $v_i, v_j \in \mathcal{V}$ there is a path whose endpoints are in $v_i$ and $v_j$, respecting the orientation of edges.

A graph $G$ can be represented by a $n\times n$ {\it adjacency matrix} $A$, whose elements $a_{ij}=1$ if $(v_i,v_j)\in \mathcal{E}$ and are zero otherwise.
Let the {\it in-neighborhood} $\mathcal{N}_i^{in}$ of a vertex $v_i$ be the set of vertices $\{v_j : (v_j,v_i) \in \mathcal{E}\}$, while $\mathcal{N}_i^{out}$ is the {\it out-neighborhood} of vertex $v_i$, i.e.,the set of vertices $\{v_j : (v_i,v_j) \in \mathcal{E}\}$; for undirected graphs the in-neighborhood and the out-neighborhood coincide and are referred to as the {\em neighborhood} $\mathcal{N}_i$.
In the remainder of this paper, where not explicitly stated, we will assume the graphs to be connected and undirected.

In the following we denote by T the index of a step in the (distributed) $k$-means algorithm, while we denote by $t$ the index of an iteration of average-consensus or max-consensus algorithms executed within each step of the distributed $k$-means algorithm.

\section{$K$-Means Algorithm}
\label{kmeans}
Consider a set of $n$ observations $x_1, \ldots, x_n$, where each observation $x_i$ is a vector in $\mathbb{R}^d$.
Suppose we want to partition the $n$ observations into $k$ ($k\leq n$) sets or {\it clusters} $S_1, \ldots, S_k$.
Specifically, we want to find a set of centroids $c_1, \ldots, c_k$, each associated to a cluster, and we want to solve the following optimization problem:

\begin{equation}
\label{optptob}
\begin{matrix}
\min D= \sum_{i=1}^n \sum_{j=1}^k r_{ij}||x_i-c_j||^2\\
\\
\mbox{Subject to}\\
\begin{cases}
\sum_{j=1}^k r_{ij} =1 & \forall i=1, \ldots, n\\
r_{ij}\in\{0,1\}& \forall i=1, \ldots, n; \forall j=1, \ldots, k
\end{cases}
\end{matrix}
\end{equation}
where $r_{ij}=1$ if observation $x_i$ is assigned to the set $S_j$ and $r_{ij}=0$ otherwise, and $c_j\in\mathbb{R}^d$ is the centroid of the observations within the set $S_j$. 

Problem \eqref{optptob} is hard to solve exactly when $n$ and $k$ are large (it is NP-hard in general Euclidean space $\mathbb{R}^d$, even for 2 clusters \cite{aloise2009np} and for a general number of clusters $k$, even in the plane \cite{mahajan2009planar}), and in the literature  several heuristic algorithms have been proposed.
Among others, the {\it $k$-means algorithm} \cite{MacQueen} proved its effectiveness.


The $k$-means algorithm starts with a random set of $k$ centroids $c_1(0), \ldots, c_k(0)$, and alternates at each step between an {\it assignment} and a {\it refinement} phase.

During the assignment phase, each observation $x_i$ is assigned to the set characterized by the nearest centroid, i.e.:
\begin{equation}
r_{ih}(T)=\begin{cases}
1 & \mbox{ if } h= \mbox{ argmin}_j ||x_i-c_j(T)||^2\\
0 & \mbox{ else}
\end{cases}
\end{equation}

During the refinement phase each centroid $c_j$ is updated as the centroid of the observations associated to $S_j(T)$, i.e.:
\begin{equation}
c_j(T+1)= \frac{\sum_{i=1}^n r_{ij}(T) x_i}{\sum_{i=1}^n r_{ij}(T) }
\end{equation}
The two steps are iterated until convergence or up to a maximum of $M$ iterations.



\begin{figure}[h!]
\begin{center}
\includegraphics[width=3.5in]{./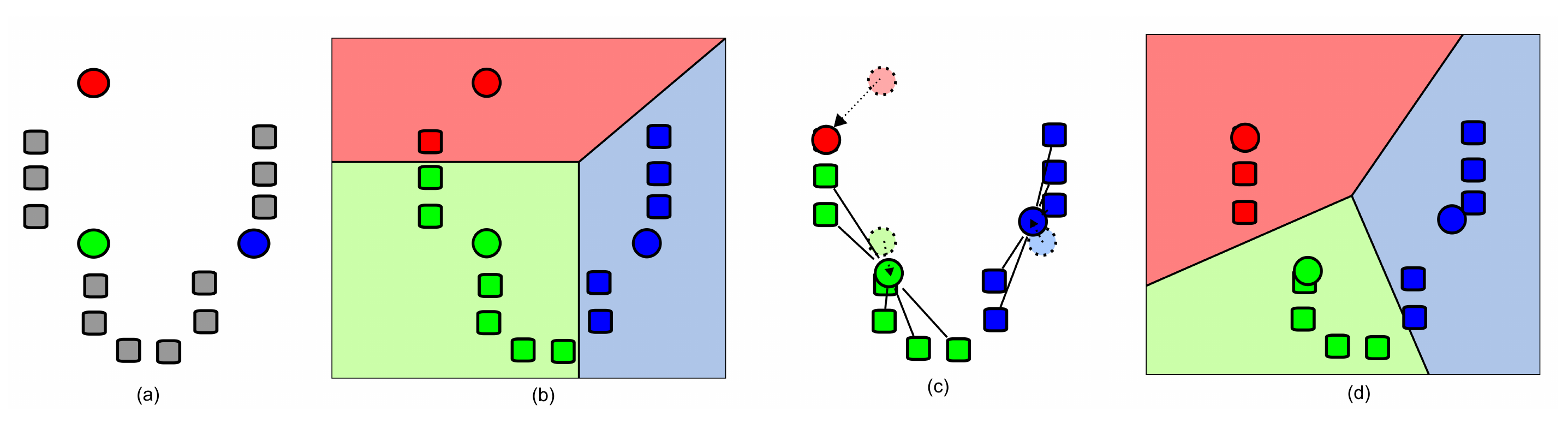}
\caption{Example of execution of $k$-means algorithm (source; Wikimedia Commons available under GNU Free Documentation License v. 1.2).}
 \label{fig:kmeans}
\end{center}
\end{figure}

Figure \ref{fig:kmeans} is an example of execution of the algorithm for a set of $n=12$ observations in $\mathbb{R}^2$ and for $k=3$.
Specifically, Figure \ref{fig:kmeans}.(a) shows with circles the initial centroids, Figure \ref{fig:kmeans}.(b) and Figure \ref{fig:kmeans}.(c) report the assignment and refinement phases for the first step, while Figure \ref{fig:kmeans}.(d) depicts the assignment phase for the second step.



The $k$-means algorithm is known to converge to a local optimum value, while there is no guarantee to converge to the global optimum \cite{MacQueen,Brucker:77}.
However, given the complexity of the problem at hand, $k$-means algorithm is de facto the most diffused heuristic algorithm: indeed ``ease of implementation, simplicity, efficiency, and empirical success are the main reasons for its popularity" \cite{jain2010data}. 

Since there is a strong dependency on the initial choice of the centroids, a common practice is to execute the algorithm several times, each time with different initial conditions, and select the best solution.

Note that for each step, each of the $n$ observations, and each of the $d$ components of the observations, the algorithm calculates the difference with each of the $k$ centers; hence the computational complexity is $O(d\,k\,n\,M)$ \cite{Brucker:77}.
\vspace{2mm}

\begin{remark}
\label{remark:kmeans++}
A center that is not chosen in the first step by any agent has a chance to be never chosen, e.g., when a random center is chosen far from the other centers and observations.
In this case a possible solution is to replace the center with a new random center.
A more sophisticated approach, namely {\em $k$-means$++$} algorithm \cite{arthur2007k}, selects one initial centroid at random among the observations, and the remaining ones are chosen in order to try to maximize the inter-centroid distance. 
As the density of observations grows, however, the risk of this singular case becomes less and less likely.
\end{remark}

\begin{remark}
The observations can be scaled in order to weigh the different components and obtain a clustering where some components have more importance than others.
This can be done by using the following norm:
\begin{equation}
||x||_A=<Ax,\,Ax>
\end{equation}
where A is a $d\times d$ diagonal matrix whose entries $a_{ii}$ represent the weights of the different components. As a result the following function is minimized instead of the one in Eq. \eqref{optptob}:

\begin{equation}
\label{funzioneobiettivoscalata}
D(T)=\sum_{i=1}^n \sum_{j=1}^k r_{ij}(T) ||x_i-c_j(T)||^2_A.
\end{equation}

\end{remark}

%
%
For what concerns the termination of the algorithm, to the best of our knowledge there is no available result about a bound for the number of iterations $M$.
Notice that the $k$-means algorithm can show slow convergence on particular instances \cite{vattani2011k}.
Such particular cases do not seem to arise in practice, as witnessed by the fact that the $k$-means algorithm has smoothed polynomial complexity \cite{arthur2009k}.

A possible practical solution to terminate the algorithm before the maximum number of iterations $M$ (which can be quite high) is reached, is when one of the following conditions is verified:
\begin{enumerate}
\item $M$ iterations are completed
\item there is no change in the assignment variables $r_{ij}(T)$ with respect to variables $r_{ij}(T-1)$;
\item $D(T)-D(T-1)<\Delta_{\max}$, for some $\Delta_{\max}>0$.
\end{enumerate}

\section{Related Work and Contribution}
\label{relwork}
\subsection{Related Work}
In the literature, parallel and distributed $k$-means implementations have been motivated by the need to categorize huge data sets of high-dimensional observations. 
In \cite{Ng:2000,Dhillon:2000,Zhang:2006} the focus is mainly on multiprocessor architectures; in \cite{Vendramin:2011} a parallel implementation of fuzzy clustering techniques is provided; in \cite{Patel:2013} a parallel version of $k$-means is provided with a specific focus on privacy preserving aspects.



Another typical distributed application involving clustering is the so called {\it consensus clustering}, 
where a set of agents each execute a clustering algorithm over the same data set. 
The agents then must reach a consensus on the optimal partition \cite{Monti:2003,Topchy:2004,Goder:2008,Forero:2008}.

In the context of wireless sensor networks there are several attempts to implement distributed clustering schemes, especially with respect to probabilistic approaches: in \cite{23} an incremental scheme is given; in \cite{18} the distributed approach is based on a gossip algorithm; in \cite{14} distributed consensus algorithms are used to reach consensus on a Mixture of Gaussian distribution; a methodology that requires the graph to be a tree is provided in \cite{29}, and a scheme based on the method of multipliers is given in \cite{12}; in \cite{Chen:2004} a decentralized clustering algorithm is provided for a hierarchical sensor network, composed of a static backbone of cluster heads and a set of low-end sensors; in \cite{Younis:2004} a distributed clustering algorithm for a sensor network is given assuming agents with different power levels.
In \cite{Forero:2008}  distributed clustering schemes are provided for a sensor network considering a consensus clustering approach.

The above works, unfortunately, are limited to a parameter estimation or to specific network topologies and configurations, or require each agent to handle an amount of information which is comparable to the centralized approach.

\subsection{Contribution}
In this paper, we assume that each agent has access to some (possibly high-dimensional) measurements (e.g., position, temperature, humidity, etc.).

We provide a distributed algorithm to cluster such an information when the agents exchange information exclusively with their one-hop neighbors.
Moreover, in spite of previous works, here we assume that each agent has very limited information about of the structure and the topology of the network.

Specifically, each agent 
\begin{enumerate}
\item has a unique identifier;
\item can exchange directly information only with his one-hop neighbors;
\item can act synchronously.
\item knows an upper bound $\tilde n$ for the number $n$ of agents in the network.
\end{enumerate}

Moreover, the network is described by a graph $G$. Consequently, any two agents can use a multi-hop strategy to exchange information, as will be discussed later in the paper.

According to the circumstances, there may be the need to group the agents into clusters that are homogeneous according to the agents' positions or to the sensor measurements.
If this task has to be performed by agents without any centralized supervision, we need to implement a distributed $k$-means clustering algorithm.

Each agent, therefore, has to solve an assignment problem with respect to a set of centroids (i.e., finding the nearest one) as well as to contribute to the refinement of the centroids, interacting with the other agents.

The procedure is executed by means of one-hop communication and the result is proven to be the same as the centralized $k$-means algorithm in terms of minimization of the objective function.
This is done by resorting to a combination of distributed consensus algorithms such as average-consensus \cite{Olfati1,Olfati2} and max-consensus \cite{cortes2008distributed}.

\label{distributedkmeans}
\begin{figure}[h!]
\begin{center}
\includegraphics[height=3in]{./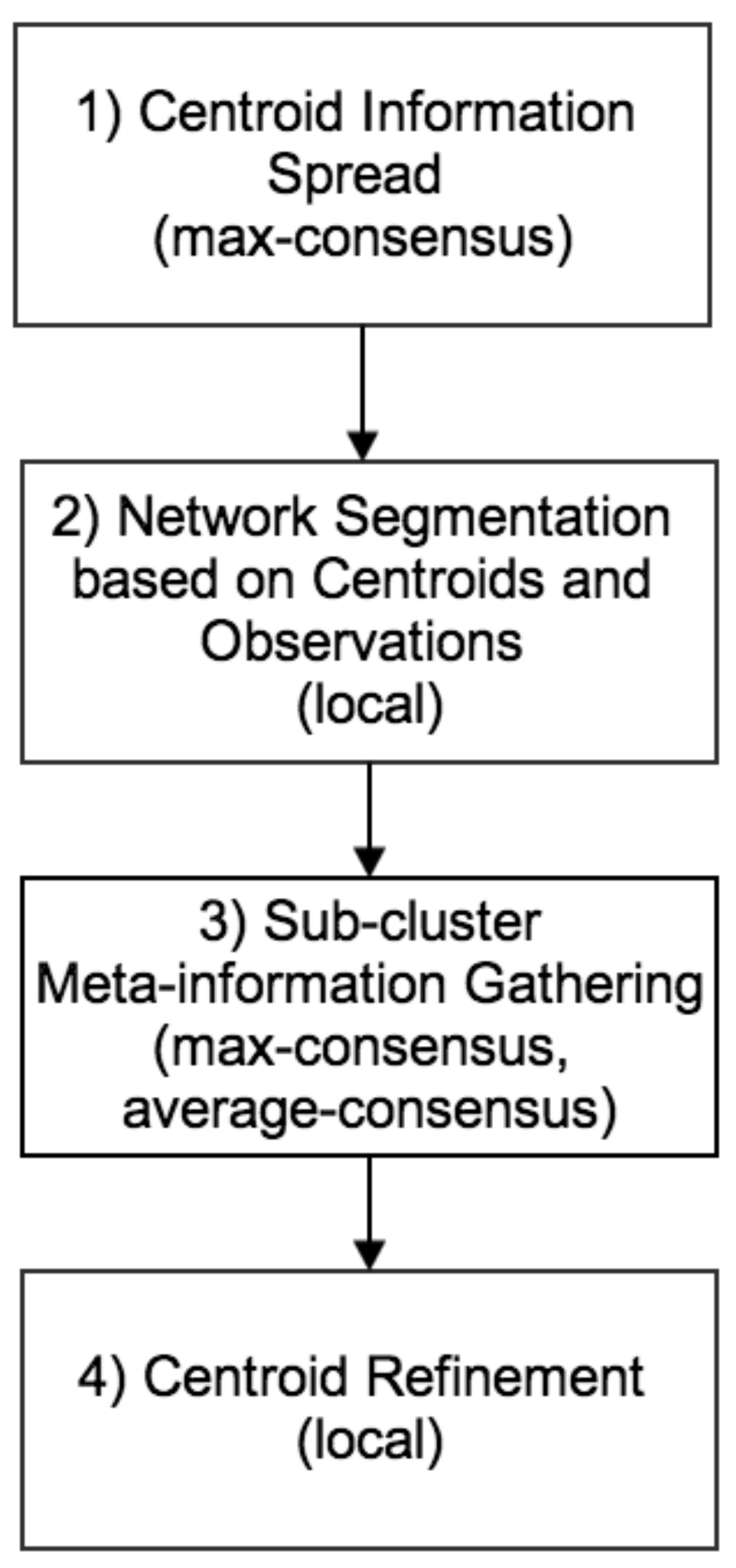}
\end{center}
\caption{Conceptual flow-chart of one step of the proposed distributed $k$-means algorithm.}
 \label{fig:conceptual}
\end{figure}

Figure \ref{fig:conceptual} shows a conceptual scheme of one step of the proposed algorithm.
Specifically, the proposed algorithm takes advantage of a max-consensus algorithm for spreading the information on current centroids across the network (phase 1 in Figure \ref{fig:conceptual}). 
Such an information is then used locally by each agent to select the nearest centroid, thus segmenting the network into communities (i.e., clusters) (phase 2). 
Notice that there is, in general, no guarantee that the communities obtained during phase 2 will be connected, and therefore, in order to update the centroids, there is the need to gather meta-information on the sub-clusters that compose each cluster, i.e., their size and their centroid (phase 3); this is done by means of a combination of max-consensus and average-consensus algorithms.
Once such an information has been gathered, the agents are able to update the centroids in a local manner (phase 4).

A typical application scenario of the proposed methodology is a wireless setting where each agent is equipped with a low range communication device, and is able to share information only with those agents that fall in its communication range; however the proposed approach is more general and applies to any connected undirected graph; moreover, it applies also to strongly connected and balanced graphs, and is easily extended to those directed graphs which can be balanced using weight balancing techniques like those presented in \cite{Hadjicostis:2012,Cortes:2012,Cai:2012}. 
For the sake of simplicity, however, we limit the scope of the paper to undirected graphs.



\section{Distributed Consensus Algorithms}
\label{consensus}
In order to provide a distributed implementation of the $k$-means algorithm, we need a mechanism to diffuse information among the agents and to refine the centroid of a subset of agents in the network.

The {\it max-consensus} \cite{cortes2008distributed} and {\it average-consensus} \cite{Olfati1,Olfati2} algorithms proved their effectiveness in composing local observations by means of one-hop communication.
Let a set of $n$ agents, each associated to a vertex in the graph $G$ and a discrete-time dynamic equation in the form:
\begin{equation}
z_i(t+1)=u_i(\mathcal{N}_i\cup \{i\},t)
\end{equation}
where $u_i(\mathcal{N}_i\cup \{i\},t)$ is a function of the state of agent $i$ and of the agents $j$ that belong to the neighborhood of agent $i$ and $z_i(t)\in \mathbb{R}^d$ for all $i$.

Let $\chi(z_{10}, \ldots, z_{n0})\in \mathbb{R}^d$ be any function of the initial conditions of all the agents; the $\chi$-consensus problem consists in finding an input $u_i(\mathcal{N}_i\cup \{i\},t)$ such that $\lim_{t\rightarrow \infty} z_i(t)=\chi(z_{10}, \ldots, z_{n0})$ for all $i=1, \ldots n$.

Among several different problems that can be solved via consensus algorithms, we next review the max-consensus and average-consensus problems.

\subsection{Max-consensus}
In the $\max$-consensus problem, the agents have to converge to the maximum of the initial conditions, i.e., $\chi(\cdot)$ is the component-wise maximum of its arguments. 
The problem, for connected  undirected graphs or strongly connected directed graphs, is known to have a solution in finite time \cite{cortes2008distributed} if the following control law is chosen:
\begin{equation}
\label{max_consensus}
u_i(\mathcal{N}_i\cup \{i\},t)=\max_{j\in \mathcal{N}_i\cup \{i\}} z_j(t)
\end{equation}

With the above control law, the problem is solved in $m\leq n$ iterations, where $m$ is the length of the diameter of the graph (i.e., the maximum among the minimum paths for every possible pair of agents, respecting the orientations of the links if the graph is directed).
Again, since the agents do not know $m$ nor $n$, for practical applications a maximum number of iterations $t_{\max}>n$ has to be selected.

In the following we will denote by

$$\overline z_i= \mbox{max-consensus}_i(z_i(0),z_j(0)|\,\, j\neq i,G,t_{\max})$$
the execution of $t_{\max}$ iterations of the max-consensus procedure by the $i$-th agent in a network $G$, starting from its own initial condition $z_i(0)$ and the ``unknown" initial conditions of the other agents, while $\overline z_i$ is the state of the $i$-th agent at iteration $t_{\max}$. Such a formalism just represents the execution of the max-consensus by the $i$-th agent, and we assume that all other agents are executing the same algorithm in a synchronous manner, each with its own initial condition.

%

\subsection{Average-consensus}
In the average-consensus problem the agents are required to converge to the component-wise average of their initial conditions.

Let each agent choose
\begin{equation}
\label{linear_iteration}
u_i(\mathcal{N}_i\cup \{i\},t)=w_{ii} z_i(t) + \sum_{j=1}^n w_{ij} z_j(k)
\end{equation}
where $w_{ij}=0$ if $(v_i,v_j)\notin \mathcal{E}$.
The update strategy for the entire system can be represented as

$$
z(t+1)=(W\otimes I_d) z(t)
$$
where matrix $W$ contains the terms $w_{ij}$, $I_d$ is the $d\times d$ identity matrix and $z(t)$ is the stack vector containing the terms $z_{i}(t)$.

According to \cite{xiao2004fast}, with this choice of $u_i(\mathcal{N}_i\cup \{i\},t)$ the state of each agent asymptotically converges to the component-wise average of the initial states if and only if: 
\begin{enumerate}
\item $W$ has a simple eigenvalue at $1$  and all other eigenvalues have magnitude strictly less than $1$; \item the left and right eigenvectors of $W$ corresponding to eigenvalue $1$ are $1_n$ and $\frac{1}{n}1_n$, respectively, where $1_n$ is a vector composed of $n$ elements, all equal to one.
\end{enumerate}

%

A possible choice, assuming that the underlying graph is undirected and connected and that each agent knows $n$ (or an upper bound for $n$), is that each agent $i$  chooses independently the terms $w_{ij}$ as
$$
w_{ij}=\begin{cases}
\frac{1}{n}, & \mbox{if } v_j\in \mathcal{N}_i\\
0, & \mbox{if } v_j\notin \mathcal{N}_i\\
1-\sum_{l=1}^l w_{il}, & \mbox{if } i=j
\end{cases}
$$
resulting in a matrix $W$ that satisfies the conditions in \cite{xiao2004fast}. Several other choices that yield asymptotic consensus are possible (e.g., see \cite{ren2005survey}).

\vspace{2mm}

\begin{remark}
\label{remark:stima_N}
Combining the max-consensus and the average-consensus algorithm, it is possible to calculate the number of agents $n$ in the network in a distributed way \cite{shames2012distributed}.
Specifically, suppose a {\em leader} is elected via max-consensus (e.g., each agent has an identifier and the agent with the maximum identifier is elected as leader via max-consensus).
Now, let the agents execute a scalar (i.e., $d=1$) average-consensus algorithm with $z_i(0)=1$ if agent $v_i$ is the leader and $z_i(0)=0$ otherwise: then, average-consensus yields for all $i=1, \ldots, n$
$$
\lim_{t\rightarrow \infty} z_i(t) = \frac{1}{n}.
$$
 
\end{remark}

Regarding the computational complexity of both max and average-consensus algorithms, for each agent $i$ we have that  $\#(\mathcal{N}_i)\leq n$, hence for vectorial initial conditions $z_i(0)\in\mathbb{R}^d$, the complexity is  $O(dn\,t_{\max})$ for each agent, where $t_{\max}$ is the total number of iterations done.

Knowing an upper bound $\tilde n$ for $n$, it is possible to choose $t_{\max}=\tilde n$ in the case of max-consensus; in the average-consensus algorithm, the solution is reached asymptotically, hence to obtain a good approximation of the solution the agents stop after a large number of iterations or when particular conditions are met \cite{yadav2007distributed,manitara2014distributed}.

The next subsection is devoted to reviewing a finite time average-consensus algorithm that overcomes such an issue.

\subsection{Finite Time Average-consensus}
\label{subsec:finite_time_consensus}

%

In the literature, several extensions of the average-consensus algorithm have been provided to grant convergence in finite time \cite{Wang:2010,sundaram2007finite,6161213,charalambous2013decentralised,tran2013distributed}, as well as some distributed stopping approaches that allow the agents to detect when they have converged within some $\epsilon$ of the asymptotic consensus value \cite{yadav2007distributed,manitara2014distributed}.
In this paper we will resort to the schema proposed in \cite{sundaram2007finite}, because, knowing an upper bound $\tilde n$ of $n$, the number of steps required to reach consensus phase is known.

Suppose the agents perform an average-consensus algorithm using the update rule $u_i(\mathcal{N}_i\cup \{i\},t)$ of Eq. \eqref{linear_iteration}; in the following we will describe the algorithm with respect to a scalar setting (i.e.,  $d=1$), as the extension to vectorial states is trivial.
The approach in \cite{sundaram2007finite} allows each agent $i$ to calculate the consensus value as a linear combination of $z_i(0),\ldots,z_i(\delta_i)$ for a finite $\delta_i>0$ which may not be the same for each agent.
In other terms, each agent $i$ seeks a coefficient vector 

$$
\Gamma_i^{'}=\begin{bmatrix}\gamma_{i,\delta_i}&\gamma_{i,\delta_{i-1}}&\ldots&\gamma_{i,0}\end{bmatrix}
$$
such that 
$$
\begin{bmatrix}z_i(\delta_i)&z_i(\delta_i-1)&\ldots&z_i(0)\end{bmatrix}\Gamma_i=\frac{1}{n}1_n^{'}z(0).
$$


Let $q_i(g)$ be the $i$-th {\em minimal polynomial}, i.e., the monic polynomial of smallest degree such that $e_i^{'}q_i(W)=0$, where $e_i$ is the $i$-th vector of the canonical basis. If the conditions $1)$ and $2)$ from \cite{xiao2004fast} are verified, then $q_i(g)$ has a root in $g=1$ with multiplicity $1$ \cite{sundaram2007finite}. 

The minimal polynomial $q_i(g)$ can be expressed as

$$
q_i(g)=g^{\delta_i+1}+\alpha_{i,\delta_i}g^{\delta_i}+\alpha_{i,\delta_i-1}g^{\delta_i-1}+\ldots+\alpha_{i,0}.
$$

Let the polynomial 
$$p_i(g)=\frac{q_i(g)}{g-1}=g^{\delta_i}+\beta_{i,\delta_i-1}g^{\delta_i-1}+\ldots+\beta_{i,0}.
$$

The terms $\gamma_{i,j}$ for $i=1,\ldots, n$, $j=0, \ldots, \delta_i$ are calculated as:

$$
\gamma_{i,j}=\frac{\beta_{i,j}}{n\,p_i(g)\Big|_1}
$$
where $p_i(g)\Big|_1$ is the value of $p_i(g)$ evaluated at $g=1$ and we take $\beta_{i,\delta_i}=1$.

As for the calculation of the coefficients $\beta_{i,j}$ of $q_i(g)$, notice that, since $e_i^{'}q_i(W)=0$ and $z(t)=W^tz(0)$ for all integers $t\geq 0$, it holds

\begin{equation}
\label{eq:13}
e_i^{'} \Big(
W^{\delta_i+1}+\alpha_{i,\delta_i}W^{\delta_i}+\alpha_{i,\delta_i-1}W^{\delta_i-1}+\ldots+\alpha_{i,0}I_N\Big)=0
\end{equation}
or equivalently
$$
\begin{bmatrix}
\alpha_{i,0}&\ldots&\alpha_{i,\delta_i-1}&\alpha_{i,\delta_i}&1
\end{bmatrix}\Theta_i=0
$$
where
$$
\Theta_i=\frac{1}{n}
\begin{bmatrix}
1_n^{'}\\
1_n^{'}W\\
\vdots\\
1_n^{'}W^{\delta_i+1}\\
\end{bmatrix}
$$

The polynomial $q_i(g)$ can be easily constructed in a centralized fashion noting that $\Theta_i$ is the observability matrix of the pair $(W,e_i^{'})$, hence $q_i(g)$ can be found by forming the matrix $\Theta_i$ and increasing $\delta_i$ until $\Theta_i$ loses rank. The coefficients of $q_i(g)$ can be obtained from the left nullspace of $\Theta_i$.

Assuming that the agents know an upper bound $\tilde n$ of $n$, a distributed procedure to let each agent construct $q_i(\cdot)$ is as follows.
Let $z_{*,1}(0),\ldots, z_{*,\tilde n}(0)$ denote a set of $\tilde n$ different initial condition vectors for all the agents. 
For each of such initial conditions, each agent executes the update rule of Eq. \eqref{linear_iteration} for $\tilde n+1$ iterations (i.e., they calculate $z_{*,j}(t+1)=Wz_{*,j}(t)$ for $j=1,	\ldots, \tilde n$ and $0\leq t \leq \tilde n-1)$.
Each agent $i$ has then access to the $\tilde n\times (\delta_i+2)$ matrix

\begin{equation}
\label{eq:15}
Z_{i,\delta_i}=\begin{bmatrix}
z_{i,1}(\delta_i+1)&z_{i,1}(\delta_i)&\cdots & z_{i,1}(0)\\
z_{i,2}(\delta_i+1)&z_{i,2}(\delta_i)&\cdots & z_{i,2}(0)\\
\vdots&\vdots&\ddots&\vdots\\
z_{i,\tilde n}(\delta_i+1)&z_{i,\tilde n}(\delta_i)&\cdots & z_{i,\tilde n}(0)
\end{bmatrix}
\end{equation}
for any $0\leq \delta_i \leq \tilde n-1$.

Suppose $\delta_i$ is the smallest integer such that $Z_{i,\delta_i}$ is not full column rank. Then there is a vector 
$$\alpha_i=
\begin{bmatrix}
1&\alpha_{i,\delta_i}&\alpha_{i,\delta_i-1}&\ldots&\alpha_{i,0}
\end{bmatrix}^{'}
$$
such that $Z_{i,\delta_i}\alpha_i=0$, implying that
$$
z_{i,j}(\delta_i+1)+\alpha_{i,\delta_i}z_{i,j}(\delta_i)+\ldots+\alpha_{i,0}z_{i,j}(0)=0.
$$
Note that $z_{i,j}(t)=e_i^{'}z_{*,j}(t)=e_i^{'}W^tz_{*,j}(0)$, hence it holds
$$
e_i^{'}\Big(
W^{\delta_i+1}+\alpha_{i,\delta_i}W^{\delta_i}+\ldots+\alpha_{i,0}I
\Big)z_{*,j}(0)=0
$$
for $j=1,2,\ldots,n$. If $z_{*,1}(0),\ldots, z_{*,n}(0)$ are linearly independent, the above expression is equivalent to that of Eq. \eqref{eq:13}. This means that the degree of $q_i(g)$ can be determined from the smallest integer $\delta_i$ for which matrix \eqref{eq:15} loses rank, and the nullspace of the corresponding matrix provides the coefficients of $q_i(g)$.

\begin{remark}
There are various ways to ensure the linear independence of the $\tilde n$ different initial conditions. For example, if all initial conditions are taken to be independent, identically distributed random variables with a Gaussian distribution, then the vectors $z_{*,1}(0),\ldots, z_{*,\tilde n}(0)$ will almost surely be linearly independent. Alternatively, if the agents have unique identifiers, one can artificially choose the initial conditions to impose linear independence. For instance, during the $j$-th run, the agent with the $j$-th greatest identifier can choose its initial condition to be 1, whereas all other agents set their initial conditions to be zero.
Notice that several max-consensus algorithms are required to choose the agents with greatest identifiers.
\end{remark}
 
 \vspace{2mm}
 
 
%
 
Regarding the computational complexity, for each agent the update rule is executed exactly $(\tilde n+1)^2$ times, hence for vectorial initial conditions $z_i(0)\in\mathbb{R}^d$, the complexity is  $O(dn^2)$ for each agent. Such a complexity is comparable with that of max-consensus algorithm, and it is in many cases smaller than the complexity of standard average-consensus algorithm, where $t_{\max}$ is typically big.
 
 In the following, with a slight abuse of notation, we will denote by
$$\overline z_i= \mbox{FTA-consensus}_i(z_i(0), z_j(0)|\,\, j\neq i,G,\tilde n)$$
the execution of the finite time average-consensus procedure by the $i$-th agent in a network $G$, starting from the initial opinion $z_i(0)$, while $\overline z_i$ is the state of the $i$-th agent at the end of the algorithm and $\tilde n$ is the upper bound of $n$ used within the algorithm. Again, we assume that all other agents are executing the same algorithm in a synchronous manner, each with its own initial condition.
 
We will also denote by 
$$
n_i= \frac{1}{\mbox{FTA-consensus}_i(n_i^0,z_j(0)|\,\, j\neq i,G,\tilde n)}
$$ 
the result of the distributed calculation, performed by the $i$-th agent, of the number of agents $n$ belonging to the network, using the approach in \cite{shames2012distributed} but employing the finite time average-consensus algorithm in \cite{sundaram2007finite}, where $n_i^0=1$ if $i$ is the leader elected via max-consensus and $n_i^0=0$ otherwise.
  
\section{Distributed $k$-means Algorithm}
\label{section4:distrib}
\begin{figure}[h!]
\begin{center}
\includegraphics[height=8.6in]{./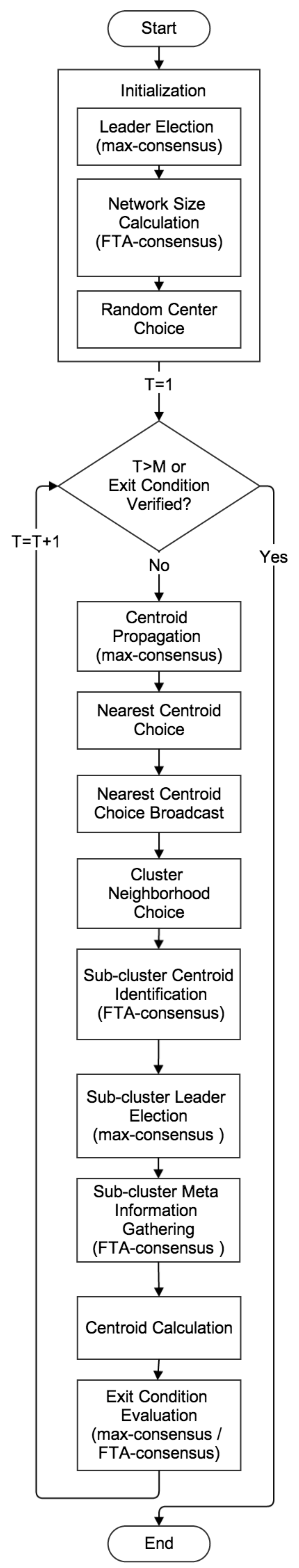}
\caption{Flow-chart of the proposed distributed $k$-means algorithm.}
 \label{fig:fixalgo}
\end{center}
\end{figure}

\begin{algorithm*}[ht!]
\begin{multicols}{2}
\SetKw{Set}{{\bf Set}}
\caption{Distributed $k$-means algorithm (executed synchronously by all agents)}
\label{distributedkmeansalgo}
\BlankLine
\KwData{Identifier $i$, upper bound $\tilde n$, observation $x_i$, Neighborhood $\mathcal{N}_i$, Exit method $C_1$ or $C_2$, Data from neighbors}
\KwResult{Centroids $c_1, \ldots, c_k$, centroid choice $\overline j_i$, final neighborhood $\mathcal{N}^c$}
\BlankLine
\tcc{Initialization}
$c_{ij}(0)=-[\infty, \ldots, \infty]^{'}$ for all $j=1, \ldots, k$;\\ \BlankLine
$\mathcal{N}_i^c(0)=\mathcal{N}_i$;\\ \BlankLine
$\overline j_i(0)=0$;\\
\BlankLine
\tcc{Leader Election}
$i^*=\mbox{max-consensus}_i(i,j|\,\, j\neq i,G,\tilde n)$;\\
\BlankLine
\tcc{Network Size Calculation}
$$
n_i^0=\begin{cases}
1 & \mbox{if } i=i^*\\
0 & \mbox{else}
\end{cases};
$$
$n_i=1/\mbox{FTA-consensus}_i(n_i^0,j|\,\, n_j^0\neq i,G,\tilde n);$
\BlankLine
\tcc{Random Center Choice}
\If{$i = i^*$}{
	agent $i$ is the leader;\\
	$c_{ij}(0)=\mbox{random}(d\times 1)$ for all $j=1, \ldots, k$;\\
}
\BlankLine
\tcc{Main Cycle}
\BlankLine
$T=1$;\\
$exit=$false;\\
\While{$T\leq M$ and $exit=$false}{
\tcc{Centroid Propagation}	
\begin{footnotesize}
$$
c^0_i(T)=\begin{cases}
[c_{i1}(0)^{'}, \ldots, c_{ik}(0)^{'}]^{'} & \mbox {if }T=1\\
\mu_i(T-1) \otimes c_{i\overline j_{i}}(T-1)+ \hat \mu_i(T-1) \otimes 1_d  & \mbox{ if } T>1
\end{cases}
$$
\end{footnotesize}
$c_{i}(T)= \mbox{max-consensus}_i\,(c^0_i(T),c^0_j(T)|\,\, j\neq i,G, n_i)$;\\
\BlankLine	
\tcc{Nearest Centroid Choice}	
$$
\overline j_{i}=\arg\min_{j=1}^k |c_{ij}(T)-x_i|;
$$$$
\mu_{ij}(T)=\begin{cases}
1 & \mbox {if } j=\overline j_{i}\\
0 & \mbox {else}
\end{cases},
\quad \hat \mu_{ij}(T)=\begin{cases}
0 & \mbox {if } j=\overline j_{i}\\
-\infty & \mbox {else}
\end{cases};
$$
\BlankLine	
\tcc{Nearest Centroid Choice Broadcast}
each agent provides $\overline j_{i}$ to the neighbors in $\mathcal{N}_i$;\\
\BlankLine	
\pagebreak
\tcc{Cluster Neighborhood Choice}
each agent selects $\mathcal{N}_i^c(T)\subseteq \mathcal{N}_i$ based on $\overline j_{j}$ for each $j\in \mathcal{N}_i$;\\
\BlankLine
\BlankLine
\BlankLine 
}		
\Indp 
{
\pagebreak
\begin{small}\tcc{Sub-cluster Centroid Identification}\end{small}
$SCC_{\overline j_i \overline h}(T)=\mbox{FTA-consensus}_i(x_i,x_j|\,\, j\neq i,G^c(T), n)$
\tcc{Sub-cluster Leader Election}
$i^*_{\overline j_i \overline h}=\mbox{max-consensus}_i(i,j|\,\, j\neq i,G^c(T), n)$;\\
\BlankLine
\If{$i = i^*_{\overline j_i \overline h}$}{ \BlankLine
	agent $i$ is a sub-cluster leader;\\ \BlankLine
}
\BlankLine
\tcc{Sub-cluster Size Calculation}
\begin{small}
$$SCS_{\overline j_i \overline h}(T)=$$
$$=\frac{1}{\mbox{FTA-consensus}_i(SCL_i(T),SCL_j(T)|\,\, j\neq i,G^c(T), n)};$$
\end{small}
\BlankLine
\begin{small}\tcc{Sub-Cluster Meta Information Gathering}\end{small}
$$
\sigma_i(T)=\begin{cases}
SCC_i(T) SCS_i(T) & \mbox{if } i = i^*_{\overline j_i \overline h}\\
0_d & \mbox{else}
\end{cases};
$$
$$
\epsilon_i(T)=\begin{cases}
SCS_i(T) & \mbox{if } i = i^*_{\overline j_i \overline h}\\
0 & \mbox{else}
\end{cases};
$$
$$\eta_i^0(T)=e_{\overline j_{i}} \otimes \begin{bmatrix}\sigma_i(T)\\ \epsilon_i(T)\end{bmatrix};$$
$$
\overline \eta(T) = \mbox{FTA-consensus}_i(\eta_i^0(T),\eta_j^0(T)|\,\, j\neq i,G,n);
$$
$$
\begin{bmatrix}\overline \sigma_j(T)\\ \\ \overline \epsilon_j(T)\end{bmatrix}=\overline \eta_j, \forall j=1,\ldots,k;
$$
\BlankLine
\tcc{Centroid Calculation}
$$
c_{ij}(T)=\frac{\overline \sigma_j(T)}{\overline \epsilon_j(T)}, \forall j=1,\ldots,k;
$$
\BlankLine
\tcc{Exit Condition Evaluation}	
$$
\nu_i^0(T)= \begin{cases}
0 & \mbox{if } \overline j_{i}(T)=\overline j_{i}(T-1)\,\, \mbox{and } C_1\\
1 & \mbox{if } \overline j_{i}(T)\neq \overline j_{i}(T-1)\,\, \mbox{and } C_1\\
n||c_{i\overline j_{i}}(T)-x_i||^2& \mbox{if } C_2
\end{cases}
$$

\If{$C_1$}{
	$
	\overline \nu(T) = \mbox{max-consensus}_i(\nu_i^0(T),\nu_j^0(T)|\,\, j\neq i,G,n)
	$;\\ \BlankLine
	\If{$\overline \nu(T)=0$}{\BlankLine exit=true;\\}
}
\Else{
	$
	\overline \nu(T) = \mbox{FTA-consensus}_i(\nu_i^0(T),\nu_j^0(T)|\,\, j\neq i,G,n)
	$;\\ \BlankLine
	\If{$\overline \nu(T)-\nu(T-1)<\Delta_{max}$}{\BlankLine exit=true;\\} \BlankLine
}

}
\Indm  
\Return  $c_{i1}(T), \ldots, c_{ik}(T)$, $\overline j_i(T)$ and $\mathcal{N}^c(T)$;\\
\end{multicols}
\end{algorithm*}

Let a network of $n$ agents be represented by a fixed and undirected graph $G$ and suppose that each agent is able to communicate with all its one-hop neighbors in a synchronous way.
Each agent is endowed with a real vector $x_{i}\in \mathbb{R}^d$ representing a piece of information or a set of measures.
Moreover, each agent has a unique identifier and knows an upper bound $\tilde n$ of the number of agents $n$ (we assume each agent has the same value for $\tilde n$).

The objective of the distributed $k$-means algorithm is to partition the agents in $k$ clusters minimizing the functional $D$ specified in eq. \eqref{optptob} (or eq. \eqref{funzioneobiettivoscalata}) via a fully distributed approach.
In the following we will neglect, without loss of generality, the procedure to mitigate the risk of obtaining a local minimum by iterating the algorithm several times and selecting the best result.
A flow chart of the proposed $k$-means distributed algorithm is provided in Figure \ref{fig:fixalgo}, while the pseudocode is given in Algorithm \ref{distributedkmeansalgo}. We assume that Algorithm \ref{distributedkmeansalgo} is executed synchronously by each agent and that the agents exchange the necessary information with their neighbors or with (a subset of) their one-hop neighborhood.

\subsection{Initialization}

The initialization phase of the Algorithm \ref{distributedkmeansalgo} is as follows.

As first step, a leader is elected. 
Supposing that each agent has a unique identifier, the agents challenge on their identifiers via max-consensus. The agent $i^*$ whose identifier is the greatest is elected as leader after no more than $\tilde n$ steps.

Based on the information on leader and non-leader agents, the size $n$ of the network is calculated resorting to the approach in \cite{shames2012distributed}, as discussed in Remark \ref{remark:stima_N}.
Notice that this operation is done by means of the finite-time average-consensus algorithm introduced in \cite{sundaram2007finite} and discussed in Section \ref{subsec:finite_time_consensus}, using the upper bound $\tilde n$. As a result of this procedure, the value of $n$ is obtained, hence for the rest of the algorithm the agents use the actual value of $n$ instead of the upper bound $\tilde n$.

To conclude the initialization phase, the leader chooses the initial centroids.
Specifically, leader $i^*$ selects $c_{i^*j}(0)\in \mathbb{R}^{d}$ at random (within the range of values of interest for each component) for all $j=1, \ldots, k$. 

The non-leader agents, conversely, choose for all $j=1, \ldots, k$
$$
c_{ij}(0)=-[\infty, \ldots, \infty]^{'}, \quad \forall i\neq i^*.
$$

 \begin{remark}
The initial centroid choice can be substituted by other approaches, such as choosing random observations  or using the $k$-means$++$ algorithm, as discussed in Remark \ref{remark:kmeans++}.
The first approach is easy to implement in a distributed way by letting each agent select a random value and then elect $k$ leaders by iterating max-consensus procedures.
The second approach requires each agent to know the current centroids and to calculate the distance between the nearest centroid and its observation, which can be implemented in a distributed fashion in a way similar to the centroid propagation procedure described in Section \ref{subsubsec:centroid_propagation}. Then, to chose the next centroid, the leader selects a random number, which is propagated via max-consensus; eventually, the agents challenge on the absolute value of the difference between the random number and the distance, again via max-consensus.
In conclusion, both of the above variations of the standard $k$-means algorithm can be implemented in a distributed fashion; in the following, for the sake of simplicity, we will consider the standard $k$-means algorithm.
 \end{remark}
 
\subsection{Main Cycle}
After the initialization phase, the main cycle is executed $M$ times, or until an exit condition is verified, as better explained in Section \ref{subsubsec:exitcond}.

At every step $T$, the main cycle is composed of the following phases: 
\subsubsection{Centroid Propagation}
\label{subsubsec:centroid_propagation}
At each step $T$, the information on the current centroids is diffused to all the agents.
 
Specifically, each agent selects a vector $c^0_i(T)\in\mathbb{R}^{kd}$ as follows:
\begin{small}
\begin{equation}
\label{eq_centroid_propagation}
c^0_i(T)=\begin{cases}
[c_{i1}(0)^{'}, \ldots, c_{ik}(0)^{'}]^{'} & \mbox {if }T=1\\
\mu_i(T-1) \otimes c_{i\overline j_{i}}(T-1)+ \hat \mu_i(T-1) \otimes 1_d  & \mbox{ if } T>1
\end{cases}
\end{equation}
\end{small}
where $c_{i\overline j_{i}}(T-1)\in\mathbb{R}^d$ is the centroid chosen by agent $i$ at step $T-1$ and $ \mu_i(T-1), \hat  \mu_i(T-1)\in \mathbb{R}^k$ are vectors representing the choice of a centroid at step $T-1$.
The expressions for $ \mu_i(T), \hat  \mu_i(T)$ are given in Eq. \eqref{equationmu} and Eq.  \eqref{equationmu1}, respectively.

Vectors $c^0_i(T)$, calculated according to Eq. \eqref{eq_centroid_propagation}, are structured so that, if used as initial conditions for a vectorial max-consensus procedure, the result is the stack vector $c(T)$ containing all the $k$ centroids of step $T$.

\vspace{2mm}

\begin{lemma}
\label{lemma_centroid_propagation}
Let us assume graph $G$ is connected, and let $c(T)$ be the stack vector of all the centroids at step $T$ of the distributed $k$-means algorithm. 
For each time step $T\geq 1$ and for each agent $i=1, \ldots, n$ it holds
$$
c(T)=\mbox{max-consensus}_i(c_i^0(T),c_j^0(T)|\,\, j\neq i,G,n)
$$
where $c_i^0(T)$ is calculated according to Eq. \eqref{eq_centroid_propagation}.
\end{lemma}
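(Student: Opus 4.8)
The plan is to reduce the claim to the finite‑time correctness of the max‑consensus iteration \eqref{max_consensus} and then carry out a purely algebraic, blockwise analysis of the initial vectors $c_i^0(T)$. Recall from the discussion of max‑consensus that, on a connected graph, iteration \eqref{max_consensus} reaches the component‑wise maximum of the initial conditions in a number of steps equal to the graph diameter, which is at most $n-1<n$; hence running $n$ iterations (the last argument in the lemma) is guaranteed to converge to that maximum. It therefore suffices to show that the component‑wise maximum, taken over all $n$ agents, of the $kd$‑dimensional vectors $c_i^0(T)$ equals the stack $c(T)=[c_1(T)',\ldots,c_k(T)']'$, and the whole argument can then be conducted block by block over the $k$ blocks of length $d$ indexed by the clusters $j=1,\ldots,k$.

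First I would treat the base case $T=1$. By the initialization, the leader $i^*$ sets $c_{i^*}^0(1)=[c_{i^*1}(0)',\ldots,c_{i^*k}(0)']'$ with all entries finite, while every non‑leader sets $c_i^0(1)=-[\infty,\ldots,\infty]'$. Taking the component‑wise maximum, every finite leader entry dominates the $-\infty$ entries of the other agents, so the result is exactly the leader's stack of random centroids, which is by definition $c(1)$.

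For the step $T>1$ I would decompose $c_i^0(T)$ into its $k$ blocks and read off the Kronecker products in \eqref{eq_centroid_propagation}. Since $\mu_i(T-1)$ is the indicator vector of the chosen cluster $\overline{j}_i$ and $\hat\mu_i(T-1)$ equals $0$ on the $\overline{j}_i$‑th entry and $-\infty$ elsewhere, block $\overline{j}_i$ of $c_i^0(T)$ equals $c_{i\overline{j}_i}(T-1)+0_d=c_{i\overline{j}_i}(T-1)$, while every block $j\neq\overline{j}_i$ equals $0_d-[\infty,\ldots,\infty]'=-[\infty,\ldots,\infty]'$. Thus each agent writes its own cluster centroid into its block and places the sentinel $-\infty$ everywhere else.

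Finally I would take the component‑wise maximum block by block. For a fixed $j$, the only agents contributing finite entries in block $j$ are exactly those with $\overline{j}_i=j$, and each of them contributes $c_{i\overline{j}_i}(T-1)$; because these centroids were produced in the Centroid Calculation phase of the previous step by an FTA‑consensus run over the whole graph $G$, this value is identical for all agents of cluster $j$ and is, by definition, the $j$‑th block of $c(T)$. Since any finite number dominates $-\infty$, the maximum in block $j$ is precisely that common centroid whenever cluster $j$ is non‑empty, and the full maximum reconstructs $c(T)$. I expect the main obstacle to be exactly this last step: one must justify that all agents assigned to the same cluster hold an identical centroid value, so that the blockwise maximum is well defined and equals the true centroid, and one must fix the convention for empty clusters, whose block remains $-\infty$ uniformly and is to be handled by the re‑seeding mechanism of Remark \ref{remark:kmeans++}.
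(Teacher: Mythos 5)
Your proof is correct and follows essentially the same route as the paper's: the $T=1$ case via the leader's finite entries dominating the $-\infty$ sentinels of the non-leaders, and the $T>1$ case via the block structure of $c_i^0(T)$ induced by $\mu_i(T-1)$ and $\hat\mu_i(T-1)$, with the component-wise maximum reassembling the stack of centroids. You are in fact somewhat more careful than the paper, which glosses over both the well-definedness point you raise (that all agents choosing cluster $j$ hold an identical centroid value, since it was produced by an FTA-consensus over all of $G$ in the previous step) and the empty-cluster case, which the paper defers to the remark following the lemma.
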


\vspace{2mm}

\begin{proof}
At step $T=1$ the only vector $c_i^0(T)$ with components greater than $-\infty$ is  $c^0_{i^*}(T)$ and the result of $n$ iterations of a max-consensus algorithm over a connected graph $G$ is $c(T)=c^0_{i^*}(T)$ for all agents. 
For $T>1$ it can be noted from Eq. \eqref{eq_centroid_propagation}, Eq. \eqref{equationmu} and Eq.  \eqref{equationmu1} that each agent vector $c^0_i(T)$ has finite-valued entries only in the correspondence of the chosen centroid, hence a max-consensus results in a stack vector containing all the centroids. 
\end{proof}

\vspace{2mm}

\begin{remark}Similarly to the centralized $k$-means algorithm, there is the risk that some centroid is not chosen by the agents. In this case the proposed distributed $k$-means algorithm overwrites each of these centroids with a vector whose components are all equal to $-\infty$.

This is due to the particular choice of $c^0_i(T)$; in fact, except for the leader at the first time step, each agent chooses just a centroid and the vector $c^0_i(T)$ is filled with $-\infty$ for the unchosen centers. 
If a center is not selected by any agent, then the corresponding components of $c^0_i(T)$ will be $-\infty$ for each agent.

Such an issue is however easy to fix: if a center $c_j$ is not chosen, then each agent will have a vector  $c^0_i(T)$ such that $c_j=-\infty$. In this case the leader may select another random centroid and a propagation can be implemented via max consensus.
\end{remark}

%

\subsubsection{Nearest Centroid Choice and Broadcast} 

Each agent chooses the index $\overline j_{i}$ of the nearest among the current centroids, i.e., 
\begin{equation}
\overline j_{i}=\arg\min_{j=1}^k |c_{ij}(T)-x_i|
\end{equation}
and updates $\mu_i(T)\in \mathbb{R}^d$ and $\hat\mu_i(T)\in \mathbb{R}^d$ as follows:
\begin{equation}
\label{equationmu}
\mu_{ij}(T)=\begin{cases}
1 & \mbox {if } j=\overline j_{i}\\
0 & \mbox {else},
\end{cases}
\end{equation}

\begin{equation}
\label{equationmu1}
\hat \mu_{ij}(T)=\begin{cases}
0 & \mbox {if } j=\overline j_{i}\\
-\infty & \mbox {else}.
\end{cases}
\end{equation}

Then, each agent provides its choice of $\overline j_{i}$ to the agents in its neighborhood $\mathcal{N}_i$.

\subsubsection{Cluster Neighborhood Choice}  
Each agent selects, among its neighbors in $\mathcal{N}_i$, those agents that share the same choice of $\overline j_{i}$. As a result a new neighborhood $\mathcal{N}_i^c(T)\subseteq \mathcal{N}_i$ is obtained.
The new neighborhood $\mathcal{N}_i^c(T)$ is then used to update the centroids.

\subsubsection{Sub-cluster Centroid Identification}  
The new neighborhood induces a subgraph $G^c(T)$ of $G$, where each agent is connected only to agents with the same centroid choice. 

Notice that, as shown in the example of Figure \ref{fig:controesempio}, this does not imply that the agents with the same center choice are connected in  $G^c(T)$ as in general there is no strong relation between the observation associated to each agent and the topology.

\begin{figure}[h]
\begin{center}
\includegraphics[width=3.5in]{./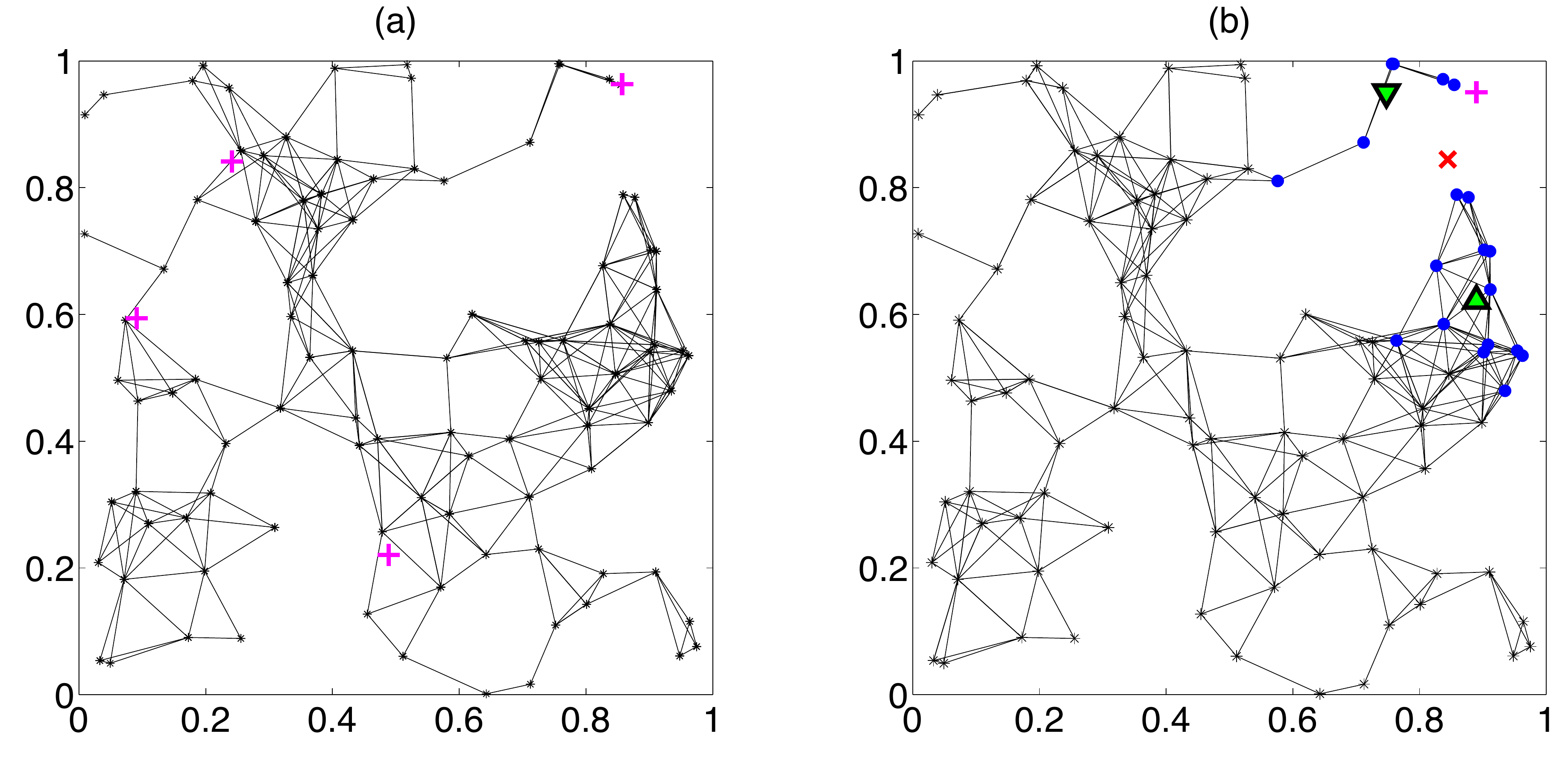}
\caption{Example of incorrect centroid choice over a disconnected subgraph induced by agents with the same centroid choice.}
 \label{fig:controesempio}
\end{center}
\end{figure}

Specifically, the purple crosses in Figure \ref{fig:controesempio}.(a) represent the $k=4$ initial random centroids selected by the leader.
Figure \ref{fig:controesempio}.(b) reports in blue the agents belonging to one of the clusters. The red x represents the correct centroid of the cluster composed of the agents in blue; however note that the blue agents are decomposed in $2$ sub-clusters.

Using the finite time average-consensus algorithm over  $G^c(T)$, therefore, each agent $i$ obtains the centroid $SCC_{\overline j_i \overline h}(T)$ of the sub-cluster $\overline h$ of cluster $\overline j_i$ it belongs to (the green triangles facing upwards and downwards in Figure \ref{fig:controesempio}.(b)).
Notice that iterating the centroid transmission phase would eventually generate an incorrect centroid (i.e., the maximum for each coordinate, represented by the purple cross in Figure \ref{fig:controesempio}.(b)).

Let us now discuss how to cope with such an issue. 

\subsubsection{Sub-cluster Leader Election and Meta Information Gathering}

In order to handle the situation depicted in Figure \ref{fig:controesempio}, the proposed algorithm features a mechanism to calculate the cluster centroids by electing a sub-cluster leader for each sub-cluster and then by calculating the size of each sub-cluster.

The sub-cluster leaders $i^*_{jh}(T)$ are elected for each cluster $j$ and each sub-cluster $h$ by means of a max-consensus algorithm over the graph $G^c(T)$.

The sub-cluster centroids $SCC_{\overline j_i \overline h}(T)$ are obtained by means of a finite-time average-consensus algorithm over $G^c(T)$, where each agent selects its observation $x_i$ as initial condition.

The sub-cluster size $SCS_{jh}(T)$ of each sub-cluster $h$ of a cluster $j$ is eventually calculated by each agent in the sub-cluster by resorting to the approach in \cite{shames2012distributed} over the graph $G^c(T)$, as discussed in Remark \ref{remark:stima_N}; the sub-cluster leader $i^*_{jh}(T)$ of each sub-cluster selects an initial condition equal to one, while other agents in the sub-cluster select zero. 

\vspace{2mm}

\begin{lemma}
\label{lemma_leader_election}
Let us assume graph $G$ is connected, and let $G^c(T)$ be the graph representing the segmentation in communities at step $T$ of the distributed $k$-means algorithm. 
For each time step $T\geq 1$ and for each agent $i=1, \ldots, n$ it holds

$$
SCC_{\overline j_i \overline h}(T)=\mbox{FTA-consensus}_i(x_i,x_j| j\neq i,G^c(T), n),
$$

$$
i^*_{\overline j_i \overline h}=\mbox{max-consensus}_i(i,j| j\neq i,G^c(T), n)
$$

and

\begin{footnotesize}
$$
SCS_{\overline j_i \overline h}(T)=\frac{1}{\mbox{FTA-consensus}_i(SCL_i(T),SCL_j(T)|\,\, j\neq i,G^c(T), n)},
$$
\end{footnotesize}
where $SCC_{\overline j_i \overline h}(T)$, $i^*_{\overline j_i \overline h}$ and $SCS_{\overline j_i \overline h}(T)$ are the centroid, the index of the leader and the size of the sub-cluster $\overline h$ of the cluster $\overline j_i$ agent $i$ belongs to, respectively, and 
$$SCL_i(T)=
\begin{cases}
1 &\mbox{if } i=i^*_{\overline j_i \overline h}\\
0 & \mbox{else}.
\end{cases}
$$
\end{lemma}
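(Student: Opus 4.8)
The plan is to reduce all three claims to the behavior of the consensus primitives of Section \ref{consensus} on a single connected component of $G^c(T)$, after first establishing that the connected components of $G^c(T)$ are precisely the sub-clusters.

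First I would characterize $G^c(T)$. By construction $\mathcal{N}_i^c(T)\subseteq\mathcal{N}_i$ retains only those neighbors $v_j$ with $\overline j_j=\overline j_i$, so an edge of $G^c(T)$ joins two agents only when they selected the same nearest centroid. Hence no edge of $G^c(T)$ crosses between distinct clusters, and the connected components of $G^c(T)$ are exactly the maximal sets of mutually reachable agents sharing a common centroid index, i.e., the sub-clusters $(\overline j,\overline h)$. Each such component is, by definition, a connected undirected graph, and its number of vertices is at most $n$.

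Second, I would invoke the consensus results componentwise. The crucial observation is that all the update rules -- the max rule of Eq. \eqref{max_consensus} and the linear rule of Eq. \eqref{linear_iteration} -- depend only on an agent's own state and the states of its neighbors in $G^c(T)$; consequently the trajectory of any agent $i$ is determined entirely by the component containing $i$, with no information flowing in from other components. Running a primitive over the disconnected $G^c(T)$ is therefore identical, for each agent, to running it over the single connected component to which that agent belongs, where the stated convergence guarantees apply and where $n$ is a valid upper bound on the component size. For the max-consensus of identifiers this yields the component-local maximum, which is by definition the sub-cluster leader $i^*_{\overline j_i\overline h}$, giving the second claim. For the finite-time average-consensus of the observations $x_i$ it yields the component-local average, which is by definition the sub-cluster centroid $SCC_{\overline j_i\overline h}(T)$, giving the first claim. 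For the third claim I would combine the first two in the manner of Remark \ref{remark:stima_N}, now applied inside each sub-cluster: by the second claim each component has a unique maximal-identifier agent, so within a sub-cluster of size $s$ exactly one agent carries $SCL_i(T)=1$ while the remaining $s-1$ carry $0$; the component-local average of $SCL$ is thus $1/s$, and inverting the finite-time average-consensus output returns $s=SCS_{\overline j_i\overline h}(T)$.

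The main obstacle I anticipate lies in the finite-time scheme of Section \ref{subsec:finite_time_consensus}, not in max-consensus: its correctness was derived under conditions $1)$ and $2)$ of \cite{xiao2004fast}, which require the weight matrix to have a \emph{simple} eigenvalue at $1$. On the disconnected $G^c(T)$ the global weight matrix has eigenvalue $1$ with multiplicity equal to the number of sub-clusters, so those conditions fail for $G^c(T)$ taken as a whole. The resolution -- and the point that must be argued carefully -- is that the coefficient vector $\Gamma_i$ that agent $i$ extracts depends only on its own sequence $z_i(0),\ldots,z_i(\delta_i+1)$, hence only on the restriction of $W$ to $i$'s component; on that restriction the simple-eigenvalue condition does hold, so the minimal polynomial $q_i(g)$ retains its unique unit root and the reconstructed value is exactly the component average. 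I would make this decoupling explicit, so that the result of \cite{sundaram2007finite} is invoked only on connected blocks, which is where it is valid.
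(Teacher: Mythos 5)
Your proposal is correct and follows essentially the same route as the paper's own proof: the connected components of $G^c(T)$ are exactly the sub-clusters, each consensus primitive is driven only by neighbor states and therefore runs independently on each component, $n$ upper-bounds every component's size, and the size claim follows by averaging the indicator $SCL_i(T)$ within a component and inverting. If anything, your argument is more careful than the paper's, which simply asserts per-component correctness of the finite-time scheme; your observation that the global weight matrix on the disconnected $G^c(T)$ has eigenvalue $1$ with multiplicity equal to the number of components (so the conditions of \cite{xiao2004fast} fail globally), but that each agent's minimal polynomial $q_i(g)$ depends only on the restriction of $W$ to that agent's component, where the simple-root condition does hold, supplies exactly the justification the paper leaves implicit.
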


\vspace{2mm}

\begin{proof}
Graph $G^c(T)$ contains several disconnected sub-clusters, and since $n$ is an upper bound of the size of each sub-cluster the finite time average-consensus algorithm \cite{sundaram2007finite} provides each agent in a sub-cluster with the component-wise average of the observations belonging to the sub-cluster, i.e., with the sub-cluster centroid $SCC_{\overline j_i \overline h}(T)$.
Similarly using the max-consensus algorithm over $G^c(T)$ for $n$ steps, where the initial state of each agent is its identifier, provides each agent with the identifier $i^*_{\overline j_i \overline h}$ of the leader of the sub-cluster it belongs to.
The finite time average consensus over $G^c(T)$ with initial state equal to $SCL_i(T)$ for all agents, eventually, provides each agent with $1/SCS_{\overline j_i \overline h}(T)$.
\end{proof}

\vspace{2mm}

The centroids are calculated as follows.

\subsubsection{Centroid Calculation}

The agents execute a finite-time average consensus algorithm over $G$ choosing for each agent $i$ an initial condition 
$$\eta_i^0(T)=e_{\overline j_{i}} \otimes \begin{bmatrix}\sigma_i(T)\\ \epsilon_i(T)\end{bmatrix} \in\mathbb{R}^{k(d+1)},$$
where $e_{\overline j_{i}}$ is the $\overline j_{i}$-th vector in the canonical basis and 

$\sigma_i(T)\in\mathbb{R}^d$ is a vector containing the sub-cluster centroid $SCC_i(T)$ weighted by the sub-cluster size $SCS_i(T)$ if agent $i$ is a sub-cluster leader and is zero otherwise, i.e.

$$
\sigma_i(T)=\begin{cases}
SCC_i(T) \cdot SCS_i(T), & \mbox{if } i\, \mbox{is a sub-cluster leader}\\
0_d, & \mbox{else}.
\end{cases}
$$

The scalar $\epsilon_i(T)$ is equal to the sub-cluster size $SCS_i(T)$ if agent $i$ is a sub-cluster leader and is zero otherwise, i.e., 

$$
\epsilon_i(T)=\begin{cases}
SCS_i(T), & \mbox{if } i\, \mbox{is a sub-cluster leader}\\
0, & \mbox{else}.
\end{cases}
$$

\vspace{2mm}

\begin{lemma}
\label{lemma_centroid calculation}
Let us assume graph $G$ is connected. 
For each time step $T\geq 1$ of the distributed $k$-means algorithm and for each agent $i=1, \ldots, n$ it holds

$$
\overline \eta(T) = \mbox{FTA-consensus}_i(\eta_i^0(T),\eta_j^0(T)|\,\, j\neq i,G,n)
$$

where
$$
\overline \eta(T) = \begin{bmatrix}\overline \eta_1(T)\\ \vdots \\ \overline \eta_k(T)\end{bmatrix}
$$
and each term $\overline \eta_j(T)=\begin{bmatrix}\overline \sigma_j(T)^{'}& \overline \epsilon_j(T)\end{bmatrix}^{'}$ is such that 
\begin{equation}
\label{eq_centroid_calculation_lemma}
c_{j}(T)=\frac{\overline \sigma_j(T)}{\overline \epsilon_j(T)},
\end{equation}
where $c_{j}(T)$ is the $j$-th centroid at step $T$.
\end{lemma}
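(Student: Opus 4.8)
The plan is to invoke the defining property of the finite-time average-consensus procedure reviewed in Section~\ref{subsec:finite_time_consensus} — namely that its output equals the component-wise average of the agents' initial conditions — and then to unwind the Kronecker structure of $\eta_i^0(T)$ so that the quotient in Eq.~\eqref{eq_centroid_calculation_lemma} reconstructs exactly the centralized refinement formula for $c_j(T)$. In this sense the lemma is really a bookkeeping statement: the FTA-consensus does the hard distributed work, and what remains is to verify that the numerators and denominators it produces are the right per-cluster aggregates.

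First I would record that, since $G$ is connected and the call is made with the exact network size $n$ (computed during initialization), the result of \cite{sundaram2007finite} gives $\overline \eta(T) = \frac{1}{n}\sum_{i=1}^n \eta_i^0(T)$. Because $\eta_i^0(T) = e_{\overline j_i}\otimes [\sigma_i(T)^{'}\ \epsilon_i(T)]^{'}$ places its only nonzero $(d+1)$-block in position $\overline j_i$, the $j$-th block of the average isolates precisely the agents whose nearest-centroid choice is $j$:
$$
\overline \eta_j(T) = \frac{1}{n}\sum_{i\,:\,\overline j_i = j}\begin{bmatrix}\sigma_i(T)\\ \epsilon_i(T)\end{bmatrix}.
$$
I would then use the definitions of $\sigma_i(T)$ and $\epsilon_i(T)$, which vanish unless $i$ is a sub-cluster leader, to collapse each block to a sum over the sub-cluster leaders of cluster $j$. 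Invoking Lemma~\ref{lemma_leader_election}, the leader of the $h$-th sub-cluster holds the true sub-cluster centroid $SCC_{jh}(T)$ and true size $SCS_{jh}(T)$, so that $SCC_{jh}(T)\cdot SCS_{jh}(T)$ equals the plain sum of the observations $x_i$ over that sub-cluster.

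Next, because the sub-clusters of cluster $j$ (there are $h^j_{\max}(T)$ of them, realized as the connected components of $G^c(T)$ restricted to the agents choosing $j$) partition the set of all agents assigned to centroid $j$, summing the leaders' contributions telescopes to $\overline \sigma_j(T) = \frac{1}{n}\sum_{i:\overline j_i=j} x_i$ and $\overline \epsilon_j(T) = \frac{1}{n}\#\{i:\overline j_i=j\}$. The common factor $1/n$ then cancels in the quotient, yielding $\overline \sigma_j(T)/\overline \epsilon_j(T) = (\sum_{i:\overline j_i=j} x_i)/\#\{i:\overline j_i=j\}$, which is exactly the centralized refinement value of the $j$-th centroid, establishing Eq.~\eqref{eq_centroid_calculation_lemma}.

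The main obstacle I anticipate is the partition argument that underlies the telescoping step: one must justify that every agent with $\overline j_i = j$ lies in exactly one connected component of $G^c(T)$ and that no observation is double-counted or dropped when passing from the per-sub-cluster aggregate (stored only at the leader) to the full per-cluster sum. This rests on the fact that $G^c(T)$ retains only intra-cluster edges, so its connected components refine the centroid-choice classes, together with the exactness of the sub-cluster centroid and size guaranteed by Lemma~\ref{lemma_leader_election}; everything else is the routine cancellation of the $1/n$ factors.
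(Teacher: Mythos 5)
Your proposal is correct and follows essentially the same route as the paper's proof: both invoke the defining averaging property of FTA-consensus over the connected graph $G$ with exact size $n$, then use the Kronecker/block structure of $\eta_i^0(T)$ and the leader-only support of $\sigma_i(T),\epsilon_i(T)$ to identify $\overline\eta_j(T)$ with $\frac{1}{n}\bigl[\sum_h SCS_{jh}(T)\,SCC_{jh}(T);\ \sum_h SCS_{jh}(T)\bigr]$. The only difference is one of completeness: the paper stops at that sub-cluster-weighted form and asserts Eq.~\eqref{eq_centroid_calculation_lemma}, whereas you also spell out the final (implicit) step --- that by Lemma~\ref{lemma_leader_election} and the fact that the connected components of $G^c(T)$ partition each centroid-choice class, the quotient telescopes to the centralized refinement value $\bigl(\sum_{i:\overline j_i=j}x_i\bigr)/\#\{i:\overline j_i=j\}$.
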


\vspace{2mm}

\begin{proof}
Under the hypothesis that the graph is undirected and connected, the result $\overline \eta(T)$ of the finite time average-consensus for each agent is such that, for all $j=1, \ldots, k$
$$
\overline \eta_j=\begin{bmatrix}\overline \sigma_j(T)\\ \\ \overline \epsilon_j(T)\end{bmatrix}=\frac{1}{n}\begin{bmatrix}\sum_{h=1}^{h^j_{\max}(T)} SCS_{jh}(T)\cdot SCC_{jh}(T)\\ \\
\sum_{h=1}^{h^j_{\max}(T)} SCS_{jh}(T)\end{bmatrix}
$$
where $h^j_{\max}(T)$ is the number of sub-clusters the $j$-th cluster is composed of.
Hence Eq. \eqref{eq_centroid_calculation_lemma} holds true and the proof is complete. 
\end{proof}

\subsubsection{Exit Condition Evaluation}
\label{subsubsec:exitcond}
As discussed at the end of Section \ref{kmeans}, there is the need to define an exit criterion, because the maximum number of iterations $M$ is typically a big quantity and it is preferable to identify an alternative way to terminate the algorithm, in order to save resources.

The stopping criterion $2)$ discussed at the end of Section \ref{kmeans} (stop if there is no variation in the association variables $r_{ij}(T)$ with respect to step $T-1$) can be easily implemented in a distributed fashion. Specifically, each agent $i$ executes a max-consensus algorithm with the following initial condition:

$$
\nu_i^0(T)= \begin{cases}
0 & \mbox{if } \overline j_{i}(T)=\overline j_{i}(T-1)\\
1 & \mbox{else}.
\end{cases}
$$

If the result of the max-consensus is $\overline \nu_i=0$, each agent concludes that there has been no variation in the assignment of the observations to the centroids from step $T-1$ to step $T$, hence the algorithm can be terminated.

The stopping criterion $3)$ is $$|D(T)-D(T-1)|<\Delta_{\max} \Rightarrow stop$$
and can be implemented in a distributed way by means of a finite-time average-consensus where the initial condition of each agent is 
$$\nu_i(0)=n|c_{i\overline j_{i}}(T)-x_i|^2.$$ 

The result of such a procedure is 
$\overline \nu_i=D(T)$, 
hence keeping track of $D(T-1)$ each agent is able to verify if the criterion is satisfied.

Notice that, since the exit conditions may not be met, each agent needs a counter to keep trace of the number of iterations in order to eventually stop when $M$ iterations are executed, where $M$ is the same for all the agents.

\section{Correctness and Time/Memory Complexity}
\label{correctness_computational_complexity}
In this Section we discuss the formal correctness of the distributed $k$-means, i.e., the fact that it responds  to the same specifications as the centralized $k$-means algorithm and that the two algorithms provide the same results if the initial centroids are the same.
We also inspect the time and memory complexity of the distributed $k$-means, and provide a comparison with the centralized one.

\subsection{Correctness}

\begin{theorem}
\label{correctness}
Let us assume that $G$ is connected and undirected, and that each agent knows an upper bound $\tilde n$ for $n$.
Suppose further that the distributed $k$-means algorithm selects the initial centroids $c_{i^*j}(0)$ for $j=1, \ldots, k$. The distributed $k$-means is correct, i.e., it provides the same results as the standard $k$-means algorithm when the same initial centroids $c_{i^*j}(0)$ are chosen.
\end{theorem}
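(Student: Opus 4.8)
The plan is to establish correctness by showing that each phase of the distributed algorithm exactly reproduces the corresponding phase of the centralized $k$-means, proceeding by induction on the step index $T$. The induction hypothesis is that at the beginning of step $T$, every agent holds the same set of centroids $c_1(T),\ldots,c_k(T)$ that the centralized algorithm would have at step $T$, and that these coincide under the common initialization $c_{i^*j}(0)$. The base case $T=1$ follows from the initialization analysis: the leader $i^*$ is elected in finite time via max-consensus, the network size $n$ is recovered exactly by the finite-time average-consensus counting scheme of Remark \ref{remark:stima_N}, and the leader's random centroids are the only finite-valued initial conditions, so the centroid-propagation max-consensus (Lemma \ref{lemma_centroid_propagation}) delivers precisely $c(1)=c^0_{i^*}(1)$ to every agent, matching the centralized initial centroids by assumption.

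For the inductive step, I would verify the two defining operations of $k$-means separately. First, the \emph{assignment phase}: given that every agent possesses the identical centroid vector $c(T)$ (Lemma \ref{lemma_centroid_propagation}), each agent $i$ computes $\overline j_i=\arg\min_j |c_{ij}(T)-x_i|$ locally, which is exactly the centralized assignment $r_{ih}(T)$ with $h=\overline j_i$. Since the observations $x_i$ and the centroids are identical to the centralized ones, the induced partition into clusters $S_1(T),\ldots,S_k(T)$ is identical. Second, the \emph{refinement phase}: here I must show that $c_{ij}(T)=\bigl(\sum_i r_{ij}(T)x_i\bigr)/\bigl(\sum_i r_{ij}(T)\bigr)$, i.e. the true cluster centroid. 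This is where the sub-cluster machinery is essential, and I would invoke Lemma \ref{lemma_leader_election} and Lemma \ref{lemma_centroid calculation} directly. Lemma \ref{lemma_leader_election} guarantees that within each connected sub-cluster of $G^c(T)$ the finite-time average-consensus returns the sub-cluster centroid $SCC_{jh}(T)$, the max-consensus elects a unique sub-cluster leader, and the counting scheme returns the sub-cluster size $SCS_{jh}(T)$. Lemma \ref{lemma_centroid calculation} then shows that aggregating the leader-weighted contributions via a global finite-time average-consensus yields $\overline\sigma_j(T)=\tfrac1n\sum_h SCS_{jh}(T)SCC_{jh}(T)$ and $\overline\epsilon_j(T)=\tfrac1n\sum_h SCS_{jh}(T)$, whose ratio is exactly the weighted average of sub-cluster centroids. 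The key algebraic observation I would make explicit is that this weighted average equals the full cluster centroid, because $\sum_h SCS_{jh}(T)SCC_{jh}(T)=\sum_{i:\overline j_i=j}x_i$ (each sub-cluster's contribution is its size times its mean, which telescopes to the raw sum over the whole cluster) and $\sum_h SCS_{jh}(T)=\#S_j(T)$; the spurious $1/n$ factors cancel in the ratio $c_{ij}(T)=\overline\sigma_j(T)/\overline\epsilon_j(T)$. Thus $c_{ij}(T)$ coincides with the centralized $c_j(T+1)$, closing the induction.

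Finally I would confirm that the termination behavior agrees: under criterion $C_1$ the max-consensus on $\nu_i^0(T)$ returns $0$ precisely when no agent changed its assignment, matching centralized stopping rule $2)$; under $C_2$ the finite-time average-consensus recovers $\overline\nu(T)=D(T)$ exactly (since $\nu_i^0(T)=n\|c_{i\overline j_i}(T)-x_i\|^2$ averages to $D(T)$), matching rule $3)$. Because every agent shares the same counter and the same $M$, all agents halt synchronously at the same step with identical centroids and assignments as the centralized run.

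The main obstacle I anticipate is the refinement step, specifically making airtight the claim that decomposing a (topologically disconnected) cluster into its sub-clusters, averaging within each, and then re-combining with size weights reproduces the global cluster mean. One must be careful that every agent of a cluster belongs to exactly one sub-cluster of $G^c(T)$, that each sub-cluster elects exactly one leader contributing a nonzero $\sigma_i,\epsilon_i$ (so no agent is double-counted and none is omitted), and that the global average-consensus correctly sums these per-leader contributions across sub-clusters even though the sub-clusters are mutually disconnected in $G^c(T)$ but jointly connected in $G$. The proof hinges on the fact that the \emph{final} aggregation runs over the full connected graph $G$, not over $G^c(T)$, so the telescoping of sub-cluster sums into the raw cluster sum is exactly the mechanism that repairs the disconnection problem illustrated in Figure \ref{fig:controesempio}; I would emphasize this cancellation as the crux of the argument.
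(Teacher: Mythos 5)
Your proposal is correct and follows essentially the same route as the paper's own proof: both argue phase-by-phase that a generic step $T$ reproduces the centralized assignment (via Lemma \ref{lemma_centroid_propagation} and the local nearest-centroid choice) and the centralized refinement (via Lemma \ref{lemma_leader_election} and Lemma \ref{lemma_centroid calculation}). Your version is in fact somewhat more complete than the paper's, since you make explicit the telescoping identity $\sum_h SCS_{jh}(T)\,SCC_{jh}(T)=\sum_{i:\overline j_i=j}x_i$ behind Lemma \ref{lemma_centroid calculation} and also verify the termination criteria, which the paper's proof leaves implicit.
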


\vspace{2mm}

\begin{proof}
If $G$ is connected and each agent knows an upper bound $\tilde n$ for $n$, then any finite time average-consensus or max-consensus procedure  (possibly with vectorial state for each agent) over $G$ will succeed.

Let us consider a generic step $T$ of the main cycle.
By Lemma 1, the Centroid Propagation phase provides each agent with a vector containing the current $k$ centroids (for the first step the random centroids selected by the leader are obtained).

As a result of the Nearest Centroid Choice, Nearest Centroid Choice Broadcast and Cluster Neighborhood Choice phases, each agent selects a neighborhood $\mathcal{N}_i^c(T)\subseteq \mathcal{N}_i$ which induces a graph $G^c(T)$ which is possibly decomposed in sub-clusters.
This is equivalent to choosing a value for the decision variables $r_{ij}(T)$: since each agent $i$ selects a cluster index $\overline j_i$, in fact, it is easy to note that the constraints of the $k$-means algorithm in Eq. \ref{optptob} are satisfied.

By Lemma 2 and Lemma 3 the centroids are correctly as the component-wise weighted average of the observations associated to the agents in each cluster, exactly as done by the centralized version of the $k$-means algorithm, hence the proof is complete.

\end{proof}

\subsection{Time and Memory Complexity}
\label{time_memory_complexity}

The following result characterizes the time complexity of the distributed $k$-means algorithm.

\begin{proposition}
The time complexity of the distributed $k$-means algorithm is $O(dkn^2M)$.
\end{proposition}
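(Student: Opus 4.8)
The plan is to bound the per-agent cost of a single pass through the main cycle and then multiply by the number $M$ of iterations. First I would recall the two primitives established earlier in the paper: a vectorial max-consensus over $G$ with a state in $\mathbb{R}^p$ costs $O(pn)$ per step and terminates in at most $n$ steps, giving $O(pn^2)$ per agent; and the finite-time average-consensus of Section \ref{subsec:finite_time_consensus} executes its update rule exactly $(\tilde n+1)^2$ times, yielding $O(pn^2)$ per agent for a state in $\mathbb{R}^p$. Thus every consensus invocation contributes $O(pn^2)$, where $p$ is the dimension of the quantity being diffused, and the whole argument reduces to tracking $p$ across phases.

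Next I would walk through the phases of one iteration and record the dimension entering each consensus call. The Centroid Propagation phase runs a max-consensus on $c_i^0(T)\in\mathbb{R}^{kd}$, costing $O(kdn^2)$. The Sub-cluster Centroid Identification runs an FTA-consensus on $x_i\in\mathbb{R}^d$ over $G^c(T)$ at $O(dn^2)$, while the Sub-cluster Leader Election and Sub-cluster Size Calculation are scalar consensus runs ($p=1$), each $O(n^2)$. The Meta Information Gathering runs an FTA-consensus on $\eta_i^0(T)\in\mathbb{R}^{k(d+1)}$ at $O(kdn^2)$, and the Exit Condition Evaluation is again a scalar consensus at $O(n^2)$. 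The remaining steps, namely the nearest-centroid $\arg\min$ over $k$ candidates in $\mathbb{R}^d$ at $O(kd)$, the local centroid division at $O(kd)$, and the one-hop broadcast of $\overline j_i$ at $O(\#(\mathcal{N}_i))\le O(n)$, are all dominated by the consensus terms.

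I would then observe that the per-iteration cost is governed by its two most expensive phases, Centroid Propagation and Meta Information Gathering, each $O(kdn^2)$, so the sum over all phases is still $O(kdn^2)$ per iteration. Since the one-time initialization (leader election via max-consensus, network-size estimation as in Remark \ref{remark:stima_N}, and random centroid selection) costs at most $O(dn^2)$ and is therefore absorbed, multiplying the per-iteration bound by the at most $M$ repetitions of the main cycle yields the claimed $O(dkn^2M)$.

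There is no deep obstacle; the only point requiring care is bookkeeping the correct state dimension $p$ for each consensus call, in particular recognizing that the vectorial invocations on $\mathbb{R}^{kd}$ and $\mathbb{R}^{k(d+1)}$ dominate while the scalar leader, size, and exit sub-routines contribute only $O(n^2)$, and confirming that the purely local steps never exceed the consensus cost. I would also note that using $n$ in place of the bound $\tilde n$ inside the consensus calls, which is legitimate once $n$ has been computed during initialization, leaves the asymptotic order unchanged.
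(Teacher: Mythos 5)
Your proposal is correct and follows essentially the same route as the paper's proof: both bound each consensus invocation by $O(pn^2)$ where $p$ is the dimension of its initial condition, tally the dimensions phase by phase (getting $kd$, $d$, $1$, $k(d+1)$, $1$, summing to $O(dk)$), and multiply by the $M$ iterations of the main cycle. Your additional remarks on the scalar leader-election run, the dominated local computations, and the absorbed initialization are harmless refinements that do not change the argument.
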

\begin{proof}
In the worst case the main cycle of the distributed $k$-means algorithm is executed $M$ times, and for each iteration a combination of max-consensus and finite time average-consensus algorithms with vectorial initial conditions is done, hence the complexity of the algorithm is $O(\phi n^2M)$, where $\phi$ is the sum of the dimension of the initial conditions of the different consensus algorithms executed during the main cycle.

Specifically, the size of the initial conditions of the consensus algorithms used in each phase is as follows:
\begin{itemize}
\item $dk$ for the centroid propagation phase;
\item $d$ for the centroid subcluster identification phase;
\item $1$ for the subcluster size calculation phase;
\item $(d+1)k$ for the subcluster meta information gathering phase;
\item $1$ for the exit condition evaluation phase.
\end{itemize}
hence it holds $\phi=(2d+1)k+d+2=O(dk)$ and the proof is complete.

\end{proof}

\vspace{2mm}

Notice that the time complexity of the distributed $k$-means algorithm is $O(n)$ times the time complexity of the standard $k$-means algorithm.
This is expected since the distributedness of the proposed setup implies that the transmission of a message from a agent to another may require a number of steps that is equal to the diameter of the graph, hence proportional to the number of agents.

However, the decentralization introduces benefits in terms of robustness (i.e., a central authority is prone to failures), energy saving (i.e., all communications are local) and memory, as emphasized by the following proposition.

\vspace{2mm}

\begin{proposition}
The memory occupancy of the distributed $k$-means algorithm is $O(kd)$, which is \text{$O(k/(n+k))$ times} the memory occupancy of the centralized $k$-means algorithm.
\end{proposition}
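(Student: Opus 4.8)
The plan is to establish the two memory figures separately by a direct tally of the quantities each processor must retain, and then to divide them; there is no asymptotic argument beyond identifying the dominant term in each count. First I would fix the model of memory accounting, namely the number of real scalars (plus $O(1)$ integer identifiers) that a processor must hold simultaneously in order to carry the $k$-means state from one step $T$ to the next.

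For the distributed algorithm I would enumerate the persistent per-agent state appearing in Algorithm \ref{distributedkmeansalgo}. Agent $i$ stores its own observation $x_i$ ($d$ scalars), the $k$ current centroid estimates $c_{ij}(T)$ together with the propagation vector $c_i^0(T)\in\mathbb{R}^{kd}$ (both $O(kd)$), the selection vectors $\mu_i(T),\hat\mu_i(T)\in\mathbb{R}^k$ and the index $\overline j_i$ ($O(k)$), and the meta-information vector $\eta_i^0(T)\in\mathbb{R}^{k(d+1)}$ used in the centroid-calculation consensus ($O(kd)$, since $k(d+1)\le 2kd$ for $d\ge 1$); the remaining items — the identifier $i$, the bound $\tilde n$, the estimate $n_i$, and the scalars $\sigma_i(T),\epsilon_i(T),\nu_i^0(T)$ — are $O(1)$ or $O(d)$. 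Taking the maximum of these contributions gives $O(kd)$ per agent.

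For the centralized algorithm I would count what a single processor running the procedure of Section \ref{kmeans} must hold: all $n$ observations $x_1,\ldots,x_n$ ($nd$ scalars), the $k$ centroids ($kd$ scalars), and the assignment data, which can be kept compactly as one cluster index per observation ($O(n)$). The dominant contributions are therefore $nd$ and $kd$, giving a total memory of $O((n+k)d)$. Dividing the two figures yields $O(kd)/O((n+k)d)=O(k/(n+k))$, which is the claimed ratio; the decisive asymmetry is that the central node must warehouse every observation, whereas each agent keeps only its own.

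The step I expect to require the most care is justifying that $O(kd)$ is genuinely the governing term for an agent, rather than the working memory of the consensus subroutines invoked within each step. A max-consensus pass needs only $O(kd)$ scratch space, since an agent holds its own vector and processes one neighbor message at a time, so it does not inflate the bound; however, the finite-time average-consensus of Section \ref{subsec:finite_time_consensus} naively retains a coefficient vector $\Gamma_i$ and a state history whose length scales with $\tilde n$, and the neighborhood $\mathcal{N}_i$ itself can contain up to $n-1$ entries. I would therefore frame the accounting as the memory devoted to the $k$-means state proper — observations and centroids — treating the consensus primitive's graph-dependent bookkeeping as either precomputed once or as part of a communication layer common to any distributed scheme, so that it does not enter the head-to-head comparison with the centralized algorithm.
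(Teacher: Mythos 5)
Your proposal is correct and follows essentially the same route as the paper's own proof: a direct tally of per-agent storage (the observation $x_i$ plus the $k$ centroids, giving $O(kd)$) against the centralized store of all $n$ observations plus $k$ centroids ($O((n+k)d)$), followed by taking the ratio. Your closing caveat—that the finite-time average-consensus subroutine's bookkeeping (the history matrix $Z_{i,\delta_i}$ and coefficient vector $\Gamma_i$, whose sizes scale with $\tilde n$) must be excluded from the accounting for the $O(kd)$ bound to hold—is actually more careful than the paper, which performs the same tally but passes over this overhead in silence.
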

\begin{proof}
In the distributed setup, a agent needs to store just the value of its piece of information $x_i\in\mathbb{R}^d$ and $k$ current centroids, each in $\mathbb{R}^d$, therefore the memory complexity in this case is $O(kd)$. In the centralized setting, conversely, the processing unit needs to store also the value of each $x_j$, hence the memory complexity is $O((k+n)d)$.
The ratio of the above memory complexities proves the statement. 
\end{proof}

\vspace{2mm}

\begin{remark}
The distributed $k$-means algorithm has remarkably smaller memory complexity with respect to the centralized setting. In fact, since $n\geq k$ it holds
 
$$0<\frac{k}{n+k}\leq \frac{1}{2}.$$
where values near zero are attained for $n>>k$, while in the worst case there is a reduction of $50\%$ of the memory complexity when moving from a centralized to a distributed setup.

Notice that the above ratio does not depend on $d$, hence the reduction in memory complexity holds both in high-dimensional and low-dimensional contexts.
\end{remark}

\section{Simulation Results}


\begin{figure*}
\begin{center}
\includegraphics[width=7in]{./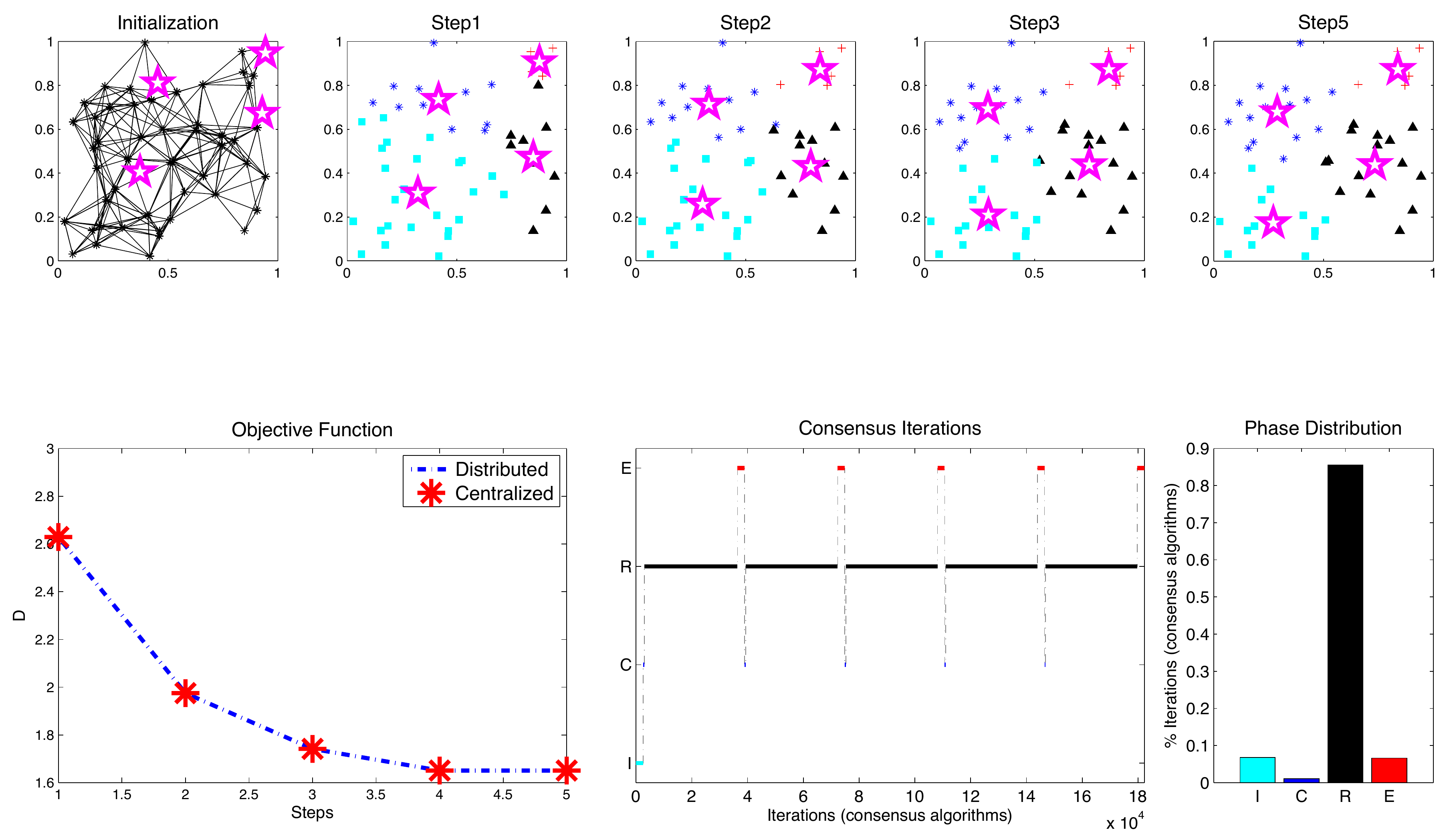}
\caption{Example of execution of the distributed $k$-means algorithm where the state of the agents coincides with their position.}
 \label{fig:positionclustering}
\end{center}
\end{figure*}

\label{results}
\begin{figure*}
\begin{center}
\includegraphics[width=6in]{./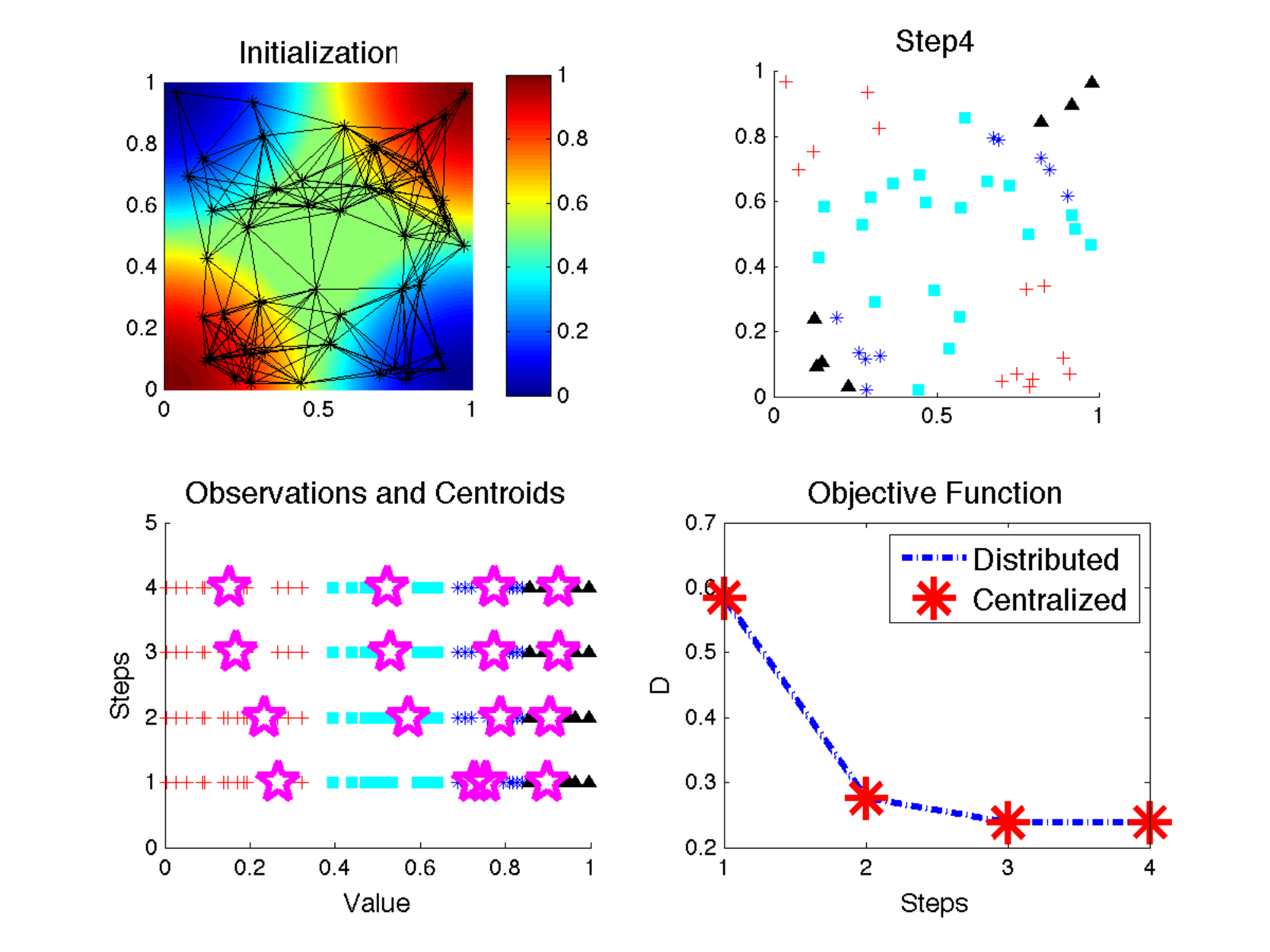}
\caption{Example of execution of the distributed $k$-means algorithm with scalar states. The state of each agent is given by the field reported in the upper left figure.}
 \label{fig:measureclustering}
\end{center}
\end{figure*}

In this section we provide some examples of application of the proposed distributed $k$-means algorithm.

We consider a situation where $n=50$ agents are embedded in the unit square in $\mathcal{R}^2$, and the agents are connected by means of a unit disk graph topology, that is, each pair of agents with distance less than a threshold $\rho$ is connected; we choose $\rho=\frac{\sqrt 2}{5}$ (i.e., $20\%$ of the maximum possible distance).  We want to partition the agents into $k=4$ clusters. We choose the second stop criterion discussed at the end of Section \ref{kmeans}.

Figure \ref{fig:positionclustering} shows the case where the observations coincide with the agent's positions. 
Specifically the upper plots in the figure show the clustering obtained with respect to the spatial distribution of the agents (the different colors and shapes represent different clusters, while the centers are reported with purple empty stars); in this case it is evident that the agents are clustered according to their position.
The lower left plot shows the evolution of the functional $D$ during the execution of the algorithm. The result of a centralized $k$-means is reported with red asterisks; it is evident that at each step the distributed formulation provides the same results as the centralized one.
The lower central plot shows the iterations of consensus algorithms required to implement the distributed $k$-means algorithm, and for each iteration the corresponding logical phase of the $k$-means algorithm is reported:  initialization (I), centroid choice (C), centroid refinement (R) and exit condition evaluation (E). 
According to the discussion provided in Section \ref{time_memory_complexity}, the number of iterations is $O(dkn^2M)\approx 10^5$ which is $n=50$ times more the time complexity of the centralized $k$-means algorithm. As for the memory complexity, instead, each agent needs to store order of $dk=8$ values, while the centralized algorithm requires order of $(k+n)d=108$, hence around $13.5$ times more information. 
It can be noted that each step of the distributed $k$-means algorithm requires exactly the same number of consensus iterations, because finite time consensus algorithms are used. 

Notice further that the centroid refinement phase is the most time consuming, because of the presence of sub-clusters and because of the combination of max-consensus and finite time average-consensus algorithms. This is also highlighted by the lower rightmost picture, which shows the percentage of consensus iterations used for each of the above four phases; as shown by the picture approximatively $85\%$ of consensus iterations are required to implement the centroid refinement phase.

%

Figure \ref{fig:measureclustering} shows an example where the agents have scalar observations in $[0,1]$, which depend on the position of the agents according to the field reported in the upper left figure, 
together with the graph topology. The upper right figure displays the final clustering obtained at step $T=4$ (again, the different colors and shapes represent different clusters). In this case the clustering pattern depends on the field, and it can be observed that very far agents belong to the same cluster.

The lower left plot shows the clustering with respect to the distribution of the observations.
Specifically, the abscissa represents the value of the observations and the ordinate represents the step of the distributed $k$-means algorithm.
Different colors and shapes represent different clusters, while the centroids are reported with purple empty stars.
The plot implies that the algorithm is indeed effective in partitioning the agents according to the observations.

The right lower leftmost plot shows the results in terms of minimization of the objective function D of Eq. \eqref{optptob} obtained as a result of the distributed (blue dotted line) and centralized (red asterisks) $k$-means algorithms. According to Theorem \ref{correctness}, the distributed $k$-means algorithm correctly provides the same results of the centralized $k$-means algorithm in terms of minimization of $D$.


\section{Conclusions and Future Work}
\label{conclusions}

In this paper we provide a fully decentralized and distributed implementation of the $k$-means clustering algorithm considering a set of agents each of which is equipped with a possibly high-dimensional piece of information or set of measurements.

The proposed algorithm, is proven to be a correct implementation of the $k$-means algorithm in that it provides the same results in terms of centroids, associations and objective function.
We also inspect the time and memory complexity of the proposed distributed implementation showing that the time complexity is $O(n)$ times more the one of the centralized version, where $n$ is the number of agents; the memory complexity, conversely, is considerably reduced from $O((k+n)d)$ to $O(kd)$, where $k$ is the number of clusters and $d$ is the dimension of the observations, hence the proposed algorithm can be implemented using computational units with less resources with respect to the centralized version.

The proposed algorithm finds a direct application in distributed data filtering/classification problems and in mobile robot coordination problems.

Further work will address the analysis of the robustness of the algorithm for noisy measurements and faults at some agents. We will also inspect dynamically changing networks.

Another interesting expected future work direction is the implementation of a distributed procedure to obtain a soft clustering, i.e., a partition where each agent belongs to different clusters with different degrees of membership.
This would contribute to capture more sophisticated aspects of the relations that exist among the agents:  each agent would then belong to each cluster with a different intensity. 

As a result a agent, while belonging to a primary cluster, may eventually contribute to the operations of one or more other clusters, e.g., in the case of an emergency or under specific circumstances. 
The above extension, however, 
appears to be a non-trivial challenge, since the proposed algorithm largely relies on the intrinsic binary association of observations to clusters and during the evolution of the algorithm the topology is segmented in disconnected components, while a soft clustering would require, rather than segmentation, complex weight adjusting schemes.

A final foreseen research direction is to devise more efficient finite time average-consensus algorithms, in order to lower the complexity of the distributed $k$-means algorithm.

\ifCLASSOPTIONcaptionsoff
  \newpage
\fi



%

\bibliographystyle{IEEEtran}
\bibliography{example}

\end{document}